\newtheorem{theorem}{Theorem}[section]
\newtheorem{corollary}[theorem]{Corollary}
\newtheorem{definition}[theorem]{Definition}
\newtheorem{proposition}[theorem]{Proposition}
\newtheorem{lemma}[theorem]{Lemma}
\newcommand{\NN}{\mathbb{N}}
\newcommand{\CC}{\mathbb{C}}
\newcommand{\ZZ}{\mathbb{Z}}
\newcommand{\inv}{^{-1}}
\def\ps@pprintTitle{%
  \let\@oddhead\@empty
  \let\@evenhead\@empty
}
\journal{}
\begin{document}

\begin{frontmatter}


\title{Aliasing in Convnets: A Frame-Theoretic Perspective}

\author[ari]{Daniel Haider}
\author[ls2n]{Vincent Lostanlen}
\author[uniwien]{Martin Ehler}
\author[ari]{Nicki Holighaus}
\author[ari,itu]{and Peter Balazs}

\affiliation[ari]{
            organization={Acoustics Research Institute, Austrian Academy of Sciences},
            city={Vienna},
            postcode={1010},
            country={AUT}}
\affiliation[ls2n]{
            organization={Nantes Université, École Centrale Nantes, CNRS, LS2N},
            city={Nantes},
            postcode={44000}, 
            country={FRA}}
\affiliation[uniwien]{
            organization={Department of Mathematics, University of Vienna},
            city={Vienna},
            postcode={1090}, 
            country={AUT}}
\affiliation[itu]{
            organization={Interdisciplinary Transformation University Austria (IT:U)},
            city={Linz},
            postcode={4040}, 
            country={AUT}}

\begin{abstract}
    Using a stride in a convolutional layer inherently introduces aliasing, which has implications for numerical stability and statistical generalization. While techniques such as the parametrizations via paraunitary systems have been used to promote orthogonal convolution and thus ensure Parseval stability, a general analysis of aliasing and its effects on the stability has not been done in this context. In this article, we adapt a frame-theoretic approach to describe aliasing in convolutional layers with 1D kernels, leading to practical estimates for stability bounds and characterizations of Parseval stability, that are tailored to take short kernel sizes into account. From this, we derive two computationally very efficient optimization objectives that promote Parseval stability via systematically suppressing aliasing. Finally, for layers with random kernels, we derive closed-form expressions for the expected value and variance of the terms that describe the aliasing effects, revealing fundamental insights into the aliasing behavior at initialization.
\end{abstract}

\begin{keyword}
Aliasing \sep Parseval stability \sep strided convolution \sep neural networks \sep filterbanks \sep random filters



\end{keyword}

\end{frontmatter}

\section{Introduction}
\label{sec:intro}
In signal processing, \textit{aliasing} describes the spectral overlap of different frequencies when sampling continuous signals, or applying a downsampling operation to a discrete signal. Fundamentally, aliasing relates to the fact that the samples of pure frequencies, i.e., linear-phase complex exponentials, are indistinguishable if they differ by a multiple of the sampling rate.
Besides analog-to-digital conversion, aliasing is a major concern in \textit{multi-rate signal processing}~\cite{vaidyanathan1993multi}, where convolution operators, usually termed as \textit{filters}, are composed with downsampling (and upsampling) operators to form \textit{multiple-input multiple-output (MIMO)} filterbanks. In the machine learning community, the composition of convolution and downsampling is widely referred to as \textit{strided convolution} and forms the fundamental building block of \emph{convolutional neural networks}: The linear part of any convolutional layer is functionally equivalent to a MIMO filterbank. 

Filterbanks, in particular \emph{invertible (or perfect reconstruction)} filterbanks have been intensively studied since the 1980s, with many authors contributing to a large corpus of literature. To date, Vaidyanathan's book~\cite{vaidyanathan1993multi} probably remains the most comprehensive reference. His
works~\cite{vaidyanathan1987pr,vaidyanathan1992pr}
and works of Vetterli et al.~\cite{119722,kovacevic1993perfect,vetterli1986filter,vetterli1989fir} may serve as representative examples of the crucial contributions of a large research community to our understanding of perfect reconstruction (PR) filterbanks. 
In most of these works, the so-called \emph{polyphase representation} was primarily used for the study of the PR property. In particular, the polyphase representation leads to a convenient characterization of critically decimated PR filterbanks as exactly those filterbanks with a polyphase representation given by a \emph{paraunitary matrix}, i.e., a matrix-valued function $R\colon \mathbb{C}\rightarrow \mathbb{C}^{M\times M}$ which satisfies $R(z)\overline{R^{\ast}(z^{-1})} =  I_M$ for all $z\in\mathbb{C}$. If the downsampling factor $d$ is smaller than the number of channels $M$, then $R(z)\in\mathbb{C}^{M\times d}$ is a matrix with orthonormal columns. Alternatively, PR filterbanks can be studied using their \emph{alias component matrix}, which is likewise a matrix-valued function $A\colon \mathbb{C}\rightarrow \mathbb{C}^{M\times d}$ each column of which is the frequency-domain form of one alias component of the filterbank. A classic equivalence between the polyphase and alias component matrix, see \cite[Eq.~(5.5.8)]{vaidyanathan1993multi}, shows that, in fact, $R(z)$ is paraunitary, if and only if $A(z)$ is paraunitary (if $d=M$), resp. orthonormal.

In the context of neural networks, these results have been used to characterize linear convolutional layers through paraunitary systems~\cite{su2022paraunitary, unser2024parseval}, and to establish a relation to the Cayley transform~\cite{trockman2021cayley}, both leading to efficient parametrization and design strategies.
However, a filterbank may be stable and invertible, even if it is not paraunitary. In that case, the alias component matrix can be used to explicitly describe the error introduced by aliasing by means of the derived \emph{Walnut representation}~\cite{balazs2017framespsycho}, thereby providing an intuitive description of a more general class of stable, invertible filterbanks. This is the perspective that we consider here, using tools (and terminology) from frame theory to describe and quantify aliasing in convolutional layers with 1D kernels~\cite{kiranyaz20211conv1d}.
The core idea is the following.
Let $\Theta$ be the operator that applies an undecimated filterbank and $\mathcal{F}$ the (unitary) Fourier transform. Then $\hat{S} = \mathcal{F}\Theta^* \Theta \mathcal{F}^*$ is a diagonal operator.
After introducing decimation, this is no longer true, and $\hat{S}$ contains additional non-zero side diagonals, which describe the effect of aliasing on the operator level by means of frequency correlation among the filters: $\hat{S}$ can be decomposed into a sum of (weighted) translation operators, the \textit{Walnut representation} of $\hat{S}$. First described by Walnut~\cite{walnut1992og} for Gabor systems this decomposition extends to arbitrary filterbanks with uniform~\cite{Janssen1998gaborbook} and non-uniform decimation~\cite{hernandez2002reproducing, holighaus2014nonstat}. It was further shown that quantifying the terms in the representation allows us to estimate stability bounds $A,B$ in the sense of
\begin{equation}\label{eq:stab}
 A \Vert x \Vert^2 \leq \big\Vert\Theta x \big\Vert^2 \leq B\Vert x \Vert^2
\end{equation}
for all inputs $x$. In fact, the existence of $0<A\leq B <\infty$ such that the above holds characterizes the invertibility and stability of the filterbank. In this case, we call it a \textit{frame}. Stability in the sense of \eqref{eq:stab} plays an important role in neural networks, where layer transforms with poor bounds $A,B$ can cause vanishing or exploding gradients, adversely affect training efficiency and generalization, and create vulnerabilities to adversarial attacks~\cite{glorot2010understand, santos2022overfitting, goodfellow2014adversarial}. To avoid instabilities, either a 1-Lipschitz constraint ($B\leq1$)~\cite{bethune2024lip, tsuzuku2018lip, pauli2022lip}, or even Parseval stability ($A=B=1$)~\cite{cisse2017parseval, brock2017orthoreg, hasannasab2020parseval} is imposed in the layers. In our work, we adapt the frame bound estimate from the $\ell^2(\ZZ)$ setting in~\cite{balazs2017framespsycho} to be applicable for convolutional neural networks with stride, and derive an extension that explicitly takes the kernel sizes of the convolutional layers into account. From our results we can derive two optimization objectives that promote tightness ($A=B$) by systematically suppressing aliasing. Directly formulated in the Fourier domain, they are computationally as cheap as the forward pass of a layer.
Overall, we provide a new perspective on the numerical stability of strided convolutional layers through aliasing, and indicate new approaches towards regularization via aliasing suppression.\\

The structure of the paper is as following.
In Section \ref{sec:basics} we prepare the overall setting for describing single-channel strided convolutional layers and their stability via filterbanks and g-frames in $\CC^L$. Section \ref{sec:walnut} introduces the Walnut representation for filterbanks in $\CC^L$ and presents estimates for stability bounds and characterizing conditions for Parseval stability, only based on the aliasing terms. We show that the conditions can be used as optimization objectives to systematically suppress aliasing and thereby promote Parseval stability as a direct consequence. Section \ref{sec:random} is dedicated to the expected value and variance of the aliasing terms for randomly initialized layers. In Section \ref{sec:multi}, we describe how all the results in the paper can be applied for multi-channel layers, non-uniform strides, and dilated convolution. Finally, Section~\ref{sec:conclusion} contains a concluding discussion, open problems, and an outlook.

\section{Filterbanks and Generalized Frames}\label{sec:basics}

\subsection{Basics and notation}
Throughout this paper, we consider finite complex-valued signals $x\in \CC^L$ and filters $w\in \CC^L$, both assumed to have periodic extension, i.e., $x [n+kL] = x [n]$ for any $k\in \mathbb{N}$.
This provides a setting where the convolution of a signal with a filter, given by
$(x  \ast w)[n] = \sum_{\ell=0}^{L-1}x [\ell]w[n-\ell]$ for $n=0,\ldots,L-1$ comes with circular boundary conditions. By time-reversing the filter, denoted by $(\mathbf{R}w)[n]=w[-n]$ we obtain cross-correlation, $(x \star w)[n] = (x \ast \mathbf{R}w)[n] = \sum_{\ell=0}^{L-1}x [\ell]w[n+\ell]$.
Let the (unitary) discrete Fourier transform (DFT) of length $L$ be given by
\begin{equation}
    \hat{x}[k] = (F x)[k] = \frac{1}{\sqrt{L}} \sum_{\ell = 0}^{L-1} x[\ell]e^{-2\pi ik\ell/L},
\end{equation}
where $F$ is the corresponding DFT matrix. From the circular boundary conditions we have that $\widehat{x\ast w} = \hat{x}\odot\hat{w}$,
where $\odot$ denotes the point-wise product of vectors. For a quadratic matrix $H\in \CC^{L\times L}$, we propose to call $\hat{H}=FHF^*$ the matrix Fourier transform of $H$.
We denote the translation by $a$ samples by $(\mathbf{T}_ax)[n]=x[n-a]$. For applying decimation on a vector by a factor $d\geq1$ we write $(x\hspace{-0.25em}\downarrow_{d})[n] = x[dn]$ and upsampling is denoted by $x\hspace{-0.25em}\uparrow_{d}[n] = x[n/d]$ if $n/d\in \mathbb{N}$ and $0$ otherwise.

\subsection{Strided convolution and filterbanks}

Applying convolution, followed by decimation is equivalent to apply convolution with a hop size, also known as \textit{strided convolution},
\begin{equation}\label{eq:conv}
    (x \ast w)\hspace{-0.25em}\downarrow_d[n] = \sum_{\ell=0}^{L-1}x [\ell]w[dn-\ell].
\end{equation}
In this context, the decimation factor $d$ is called the stride. We will assume that $L/d\in \mathbb{N}$ or, if necessary, extend the signal length to the next multiple of $d$, given by $d\lceil \tfrac{L}{d} \rceil$.

A linear single-channel \textit{strided convolutional layer} for $\CC^L$ is a \textit{filterbank} consisting of filters $(w_j)_{j=1}^M$ in $\CC^L$ that decompose a signal $x\in \CC^L$ into $M$ channels by strided convolution, i.e., for every $j=1,\dots,M$, the output is $y_j=(x \ast w_j)\hspace{-0.25em}\downarrow_d \in \CC^{L/d}$. We will use the notation
\begin{equation}
    \{(w_j)_{j=1}^M,\downarrow_d\}
\end{equation}
to denote such a filterbank. 
In practice, the filters in a convolutional layer have a customizable, preferably small, number of trainable parameters, much smaller than the signal length $L$ (common choices vary between 3 and 32~\cite{lou2019convtasnet,baevski2020wav2vec,chen2022wavlm}). We denote this number by $L_K$. In the filterbank paradigm this can be interpreted as the $w_j$ having non-zero entries only at the first $L_K\ll L$ coordinates. In this spirit, we will refer to the full-length vectors $w_j\in \CC^L$ as the \textit{filters} and to the non-zero part containing the $L_K$ (trainable) parameters as the \textit{kernels} of $w_j$.\\

To maintain clarity and conciseness, all results in this work will be formulated for the single-channel case. In Section \ref{sec:multi}, we discuss how they can be extended to multi-channel layers, non-uniform strides, and dilated convolution.

\subsection{Generalized frames}

To study the numerical stability of a filterbank we use a formalism that treats each of the strided convolutions as a separate linear operator. The corresponding notion is that of a generalized frame, or g-frame for short~\cite{sun2006gframes}.
\begin{definition}
    A g-frame is a collection $(T_j)_{j=1}^M$ of linear operators $T_j$ which satisfies that there are $A,B>0$ such that
\begin{equation}\label{eq:gframe}
    A\Vert x\Vert^2\leq \sum_{j=1}^M \Vert T_j x \Vert^2 \leq B\Vert x\Vert^2
\end{equation}
for all $x\in \CC^L$. The constants $A,B$ are called the \textit{frame bounds}. A g-frame is said to be \textit{tight} if $A=B$ and \textit{Parseval} if $A=B=1$. 
\end{definition}
For the transform $\Theta$, given by $x\mapsto (T_jx)_{j=1}^M$ (also called the analysis operator), the inequality in~\eqref{eq:gframe} represents a stability condition
which can be interpreted as relaxed variant of the energy preservation condition for Parseval stability of orthogonal operators. Moreover, $\Theta$ is injective if and only if $(T_j)_{j=1}^M$ is a g-frame.
The most important operator in this paper will be the \textit{frame operator}, defined by
\begin{align}
    S=\Theta^*\Theta:\CC^L &\rightarrow \CC^L\\
    x & \mapsto \sum_{j=1}^M T^*_j T_j x,
\end{align}
where $T^*_j$ is the adjoint operator of $T_j$ in the sense of $\langle T_jx,y \rangle_{\CC^{L'}} = \langle x,T_j^*y \rangle_{\CC^{L}}$ for $x\in \CC^L$ and $y\in \CC^{L'}$. In the case where $(T_j)_{j=1}^M$ is a g-frame for $\CC^L$, then $S$ is an invertible and self-adjoint $L\times L$ matrix whose smallest and largest eigenvalues are always positive and give the optimal values among all possible frame bounds $A,B$ in \eqref{eq:gframe}. Also in terms of tightness ($A=B$), the frame operator is of major importance.

\begin{lemma}
    The following are equivalent.
    \begin{enumerate}[(i)]
        \item $(T_j)_{j=1}^M$ is a tight g-frame for $\CC^L$.
        \item There is $A>0$ such that $\Vert \Theta x \Vert^2 
    = A\Vert x\Vert^2$ for all $x\in\CC^L$.
        \item There is $A>0$ such that $S=\hat{S}= A\cdot  I_L$.
    \end{enumerate}
\end{lemma}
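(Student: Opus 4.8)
The plan is to establish the chain (i) $\Leftrightarrow$ (ii) $\Leftrightarrow$ (iii), where the first equivalence is essentially definitional and the second rests on identifying the quadratic form of the frame operator $S=\Theta^*\Theta$ with the squared analysis norm. Throughout I would use that, by definition of the analysis operator $\Theta\colon x\mapsto(T_jx)_{j=1}^M$, the norm on the product space satisfies $\Vert\Theta x\Vert^2=\sum_{j=1}^M\Vert T_jx\Vert^2$, and that $\langle\Theta x,\Theta x\rangle=\langle\Theta^*\Theta x,x\rangle=\langle Sx,x\rangle$.

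For (i) $\Leftrightarrow$ (ii), the g-frame inequality \eqref{eq:gframe} reads $A\Vert x\Vert^2\le\sum_{j=1}^M\Vert T_jx\Vert^2\le B\Vert x\Vert^2$. Tightness is precisely the case $A=B$, in which the two outer bounds coincide and force $\sum_{j}\Vert T_jx\Vert^2=A\Vert x\Vert^2$ for every $x$, i.e.\ $\Vert\Theta x\Vert^2=A\Vert x\Vert^2$. Conversely, any such equality trivially supplies matching lower and upper frame bounds $A=B$, so $(T_j)_{j=1}^M$ is a tight g-frame. No estimate is involved here; it is only a matter of unwinding the definition.

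For (ii) $\Leftrightarrow$ (iii), I would rewrite (ii) as $\langle Sx,x\rangle=A\langle x,x\rangle$ for all $x\in\CC^L$, i.e.\ $\langle(S-A\,I_L)x,x\rangle=0$ for all $x$. Since $S$ is self-adjoint, $S-A\,I_L$ is Hermitian; and over the complex field the quadratic form $x\mapsto\langle Hx,x\rangle$ determines $H$ uniquely (by polarization). Hence the vanishing of this form forces $S=A\,I_L$. To pass to $\hat{S}$, I would conjugate by the unitary DFT matrix: $\hat{S}=FSF^*=A\,FF^*=A\,I_L$, using $FF^*=I_L$, so indeed $S=\hat{S}=A\,I_L$. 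The converse is immediate: if $S=A\,I_L$ then $\Vert\Theta x\Vert^2=\langle Sx,x\rangle=A\Vert x\Vert^2$, which is (ii).

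The only step that is more than bookkeeping is the deduction $S=A\,I_L$ from the vanishing of the quadratic form. I expect this to be the main (mild) obstacle: it genuinely uses that we work over $\CC$ --- the complex polarization identity --- together with the self-adjointness of $S$ recorded earlier; over $\RR$ one would additionally need symmetry, which $S$ also enjoys, so the conclusion is robust in either case. The remaining Fourier step is immediate because $F$ is unitary.
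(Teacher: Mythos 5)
Your proof is correct: the paper states this lemma without proof, treating it as a standard frame-theoretic fact, and your argument — unwinding the definition for (i)$\Leftrightarrow$(ii), then polarization for $S=A\,I_L$ and unitarity of $F$ for $\hat{S}=FSF^*=A\,I_L$ — is exactly the canonical route one would write out. The only cosmetic remark is that over $\CC$ the polarization identity recovers $S-A\,I_L$ from the vanishing of its quadratic form without invoking self-adjointness at all, so that hypothesis, though available, is not actually needed in your key step.
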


For a filterbank $\{(w_j)_{j=1}^M,\downarrow_d\}$ in $\CC^L$,
the elements of the associated g-frame and their adjoints are given by
\begin{align}
\begin{split}
    T_j:\CC^L&\rightarrow \CC^{L/d}\hspace{2.5cm} T^*_j:\CC^{L/d}\rightarrow \CC^L\\
    x&\mapsto (x \ast w_j)\hspace{-0.25em}\downarrow_{d},\hspace{2.85cm} y\mapsto y\hspace{-0.25em}\uparrow_d\ast\; \overline{\mathbf{R}w_j},
\end{split}
\end{align}
and the frame operator applies as $Sx = \sum_{j=1}^M(((x\ast w_j)\hspace{-0.25em}\downarrow_d)\hspace{-0.25em}\uparrow_d)\ast \overline{\mathbf{R}w_j}$.
In many applications, using a filterbank that is a tight g-frame comes with many amenities. The first main advantage is that we can perfectly reconstruct any $x$ from its filterbank coefficients using the same filters. This is reflected by $\Theta^*\Theta x = Sx = Ax$. Secondly, energy preservation up to a constant in $(ii)$ provides robustness against perturbations~\cite{fickus2012robust}. In the neural network context, the Parseval case ($A=1$) is equivalent to what is referred to as \textit{orthogonal convolution}, and has been leveraged to stabilize the gradient flow, improve training efficiency and generalization, and increase the robustness against adversarial attacks.

\subsection{Extension to $\ell^2(\mathbb{Z})$}

For proving results that are independent of the signal length $L$, we will use that a filterbank in $\CC^L$ is completely determined by its extension to $\ell^2(\mathbb{Z})$. We use calligraphic symbols for all operators here.
Analogously to the definition of a g-frame for $\CC^L$, a filterbank
with filters $g_j \in \ell^2(\ZZ)$ and decimation factor $d$ is a g-frame for $\ell^2(\ZZ)$ if the frame inequalities from Eq.~\eqref{eq:gframe} hold with the corresponding strided convolution and norms in $\ell^2(\ZZ)$.
It is a tight g-frame if $A=B$, which happens if and only if the associated frame operator $\mathcal{S}:\ell^2(\ZZ)\rightarrow\ell^2(\ZZ)$ satisfies that $\mathcal{S}=A\cdot  I_{\ell^2(\ZZ)}$.
For $g\in \ell^2(\ZZ)$, let the discrete-time Fourier transform (DTFT) from $\ell^2(\ZZ)$ to $ L^2([0,1))$ be given by
\begin{equation}
    \hat{g}(\xi) = (\mathcal{F}{g})(\xi) = \sum_{\ell\in\ZZ} g[\ell]e^{-2\pi i\xi\ell}.
\end{equation}
Analogously, tightness is characterized by  $\hat{\mathcal{S}}=\mathcal{F}\mathcal{S}\mathcal{F}^*=A\cdot  I_{ L^2([0,1))}$.

\section{Stability Through Aliasing Suppression}\label{sec:walnut}

The aliasing effects in a filterbank that are introduced due to strided convolution can be expressed directly in terms of the matrix Fourier transform of the associated frame operator, $\hat{S}$. In particular, it can be written as a sum of terms that describe the frequency correlation among the filters. The so-called \textit{Walnut representation} of $\hat{S}$ makes this structure visible.
\subsection{Better call Walnut}
The following proposition is a finite dimensional version of Prop.~4 in \cite{balazs2017framespsycho}, which was originally formulated for filterbanks in $\ell^2(\ZZ)$.
\begin{proposition}[Walnut representation for filterbanks in $\CC^L$]\label{prop:walnut}
    \;\\Let $\{(w_j)_{j=1}^M,\downarrow_d\}$ be a filterbank in $\CC^L$ then
    \begin{equation}
        \hat{S}\hat{x} = \sum_{n=0}^{d-1} G_n \odot \mathbf{T}_{n\frac{L}{d}}\hat{x},
    \end{equation}
    where
    \begin{equation}
        G_{n} = d\inv\sum_{j=1}^{M} \hat{w}_j \odot \overline{\mathbf{T}_{n\frac{L}{d}}\hat{w}_j}
    \end{equation}
    are called the aliasing terms of the filterbank.
\end{proposition}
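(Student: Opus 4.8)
The plan is to compute the matrix Fourier transform $\hat S = FSF^*$ directly, by pushing each of the three operations that compose the frame operator through the DFT and then reading off the coefficient of each shift $\mathbf{T}_{n\frac{L}{d}}\hat x$. Since $\hat S\hat x = FSF^*Fx = F(Sx) = \widehat{Sx}$, it suffices to Fourier-transform the explicit expression $Sx = \sum_{j=1}^M (((x\ast w_j)\hspace{-0.25em}\downarrow_d)\hspace{-0.25em}\uparrow_d)\ast\overline{\mathbf{R}w_j}$ factor by factor and then collect terms.

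First I would dispatch the two convolutions via the convolution theorem: the analysis step contributes a pointwise factor $\hat w_j$, and the synthesis step contributes $\widehat{\overline{\mathbf{R}w_j}} = \overline{\hat w_j}$, which follows from the elementary identities $\widehat{\mathbf{R}w_j} = \hat w_j[-\,\cdot\,]$ and $\widehat{\overline{v}} = \overline{\hat v[-\,\cdot\,]}$ (reflection maps $\hat w_j$ to $\hat w_j[-\,\cdot\,]$, and the subsequent conjugation returns $\overline{\hat w_j}$). The heart of the argument is the middle operation $\hspace{-0.25em}\downarrow_d\hspace{-0.25em}\uparrow_d$, which in the time domain is simply multiplication by the indicator $\mathbf 1_{d\mid m}$ of the sublattice $d\ZZ$. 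Using the finite Dirac-comb identity $\mathbf 1_{d\mid m} = \tfrac1d\sum_{n=0}^{d-1} e^{2\pi i nm/d}$ together with the modulation–shift duality of the DFT (multiplication by $e^{2\pi i nm/d} = e^{2\pi i n(L/d) m/L}$ in time equals translation by $n\tfrac{L}{d}$ in frequency, which is integer-valued precisely because $L/d\in\NN$), this step becomes the aliasing sum $\widehat{y\hspace{-0.25em}\downarrow_d\hspace{-0.25em}\uparrow_d} = \tfrac1d\sum_{n=0}^{d-1}\mathbf{T}_{n\frac{L}{d}}\hat y$.

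Composing the three factors for each $j$ and summing over $j$, the spectrum of $Sx$ turns into a double sum over the channel index $j$ and over the $d$ aliasing shifts. Collecting, for each fixed $n$, the terms multiplying $\mathbf{T}_{n\frac{L}{d}}\hat x$ produces the coefficient $d^{-1}\sum_{j} \hat w_j\odot\overline{\mathbf{T}_{n\frac{L}{d}}\hat w_j} = G_n$, which is exactly the claimed formula. The remaining work is pure bookkeeping: reindexing the shift variable (using that $S$ is self-adjoint, so $\hat S$ is Hermitian and the shift index may be replaced by $d-n$) and checking that the synthesis factor $\overline{\hat w_j}$ is paired with the correct shifted copy of $\hat w_j$.

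The step I expect to be the main obstacle is this middle one: rewriting $\hspace{-0.25em}\downarrow_d\hspace{-0.25em}\uparrow_d$ as the frequency-domain aliasing sum, since this is where the characteristic spectral overlap enters and where both the constant $d^{-1}$ and the family of shifts $\{n\tfrac{L}{d}\}_{n=0}^{d-1}$ are generated. An alternative route that sidesteps part of this bookkeeping is to invoke the $\ell^2(\ZZ)$ Walnut representation (Prop.~4 of~\cite{balazs2017framespsycho}) and specialize it to $L$-periodic signals, for which the DTFT collapses onto the DFT and the continuous shifts become the lattice $\{n\tfrac{L}{d}\}$; the direct computation above, however, has the advantage of keeping the whole argument self-contained in $\CC^L$.
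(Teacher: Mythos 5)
Your proposal is correct and takes essentially the same route as the paper: the paper likewise Fourier-transforms $Sx$ factor by factor, handling the two convolutions via the convolution theorem (with $\widehat{\overline{\mathbf{R}w_j}}=\overline{\hat{w}_j}$) and converting the middle step $\downarrow_d\uparrow_d$ into the aliasing sum $\tfrac{1}{d}\sum_{n=0}^{d-1}\mathbf{T}_{n\frac{L}{d}}\hat{y}$, where your finite Dirac-comb identity is precisely the Poisson-summation step the paper invokes. The conjugation/shift-reindexing bookkeeping you flag at the end (via Hermitian symmetry of $\hat{S}$ and $n\mapsto d-n$) is exactly what the paper compresses into its closing remark about ``exchanging the complex conjugations of the filters,'' so the two arguments coincide in every essential respect.
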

Although the proof is essentially the same as the one for~\cite[Prop.~4]{balazs2017framespsycho}, we include one using the terminology used in this paper.
\begin{proof}
For $x\in \CC^L$, we use the following two facts. First, the Fourier transform after down and upsampling is given by
\begin{equation}\label{eq:downup}
(F((x\hspace{-0.25em}\downarrow_d)\hspace{-0.25em}\uparrow_d))[k] = \sum_{n=0}^{L/d-1}x[nd]e^{-2\pi iknd/L}.
\end{equation}
Second, due to the periodic extension we can interpret $x$ as $L$-periodic sequence, and Poisson's summation formula gives for every $\ell=0,\dots, L-1$ that
\begin{equation}\label{eq:poisson}
\sum_{k=0}^{L/d - 1} x[kd] \, e^{-2\pi i \ell kd / L} = d\inv \sum_{n=0}^{d - 1} \hat{x}[\ell-n\tfrac{L}{d}].
\end{equation}
For every $\ell=0,\dots, L-1$, we derive
\begin{align}
    \hat{S}\hat{x}[\ell] = FS x[\ell]
    &= F\left(\sum_{j=1}^M(((x\ast w_j)\hspace{-0.25em}\downarrow_d)\hspace{-0.25em}\uparrow_d)\ast \overline{\mathbf{R}w_j}\right)[\ell] \\
    &= \sum_{j=1}^M F(((x\ast w_j)\hspace{-0.25em}\downarrow_d)\hspace{-0.25em}\uparrow_d)[\ell]\cdot F(\overline{\mathbf{R}w_j})[\ell]\\
    &= \sum_{j=1}^M\sum_{k=0}^{L/d-1} (x\ast w_j)[kd]e^{-2\pi i\ell kd/L}\cdot \overline{\hat{w}_j[\ell]}\hspace{1em}\text{by }\eqref{eq:downup}\\
    &= \sum_{j=1}^M\sum_{k=0}^{L/d-1} F^*(\hat{x}\odot \hat{w}_j)[kd]e^{-2\pi i\ell kd/L}\cdot \overline{\hat{w}_j[\ell]}\\
    &= \sum_{j=1}^{M}d\inv\sum_{n=0}^{d-1}\hat{x}[\ell-n\tfrac{L}{d}] \cdot \hat{w}_j[\ell-n\tfrac{L}{d}]\cdot \overline{\hat{w}_j[\ell]}\hspace{1em}\text{by }\eqref{eq:poisson}.
\end{align}
Exchanging the complex conjugations of the filters and isolating the aliasing terms yields the final expression.
\end{proof}
In other words, $\hat{S}$ is a band-diagonal matrix with $d$ bands that describe the frequency correlation among the filters. The entries of $\hat{S}$ are given by
\begin{equation}
    \hat{S}[k,\ell]=\begin{cases}
        G_{\lfloor \frac{\ell d}{L} \rfloor}[k\;\text{ mod } L/d]&\text{for }\; k-\ell\equiv 0\;\text{ mod } L/d\\
        0&\text{otherwise.}
    \end{cases}
\end{equation}

\begin{figure}[t]
    \centering
    \includegraphics[width=\linewidth]{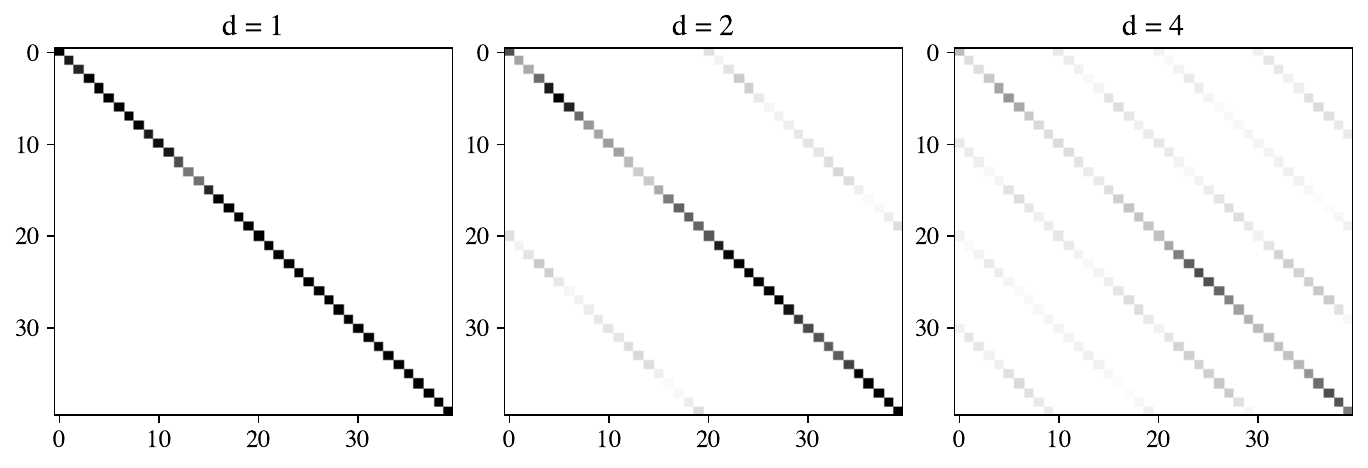}
    \caption{The matrix $\hat{S}=FSF^*$ for a filterbank for $\CC^{40}$ with $M=4$ i.i.d.~Gaussian complex random kernels of size $L_K=8$ and different strides $d=1,2,4$. 
    }
    \label{fig:shat}
    \includegraphics[width=\linewidth]{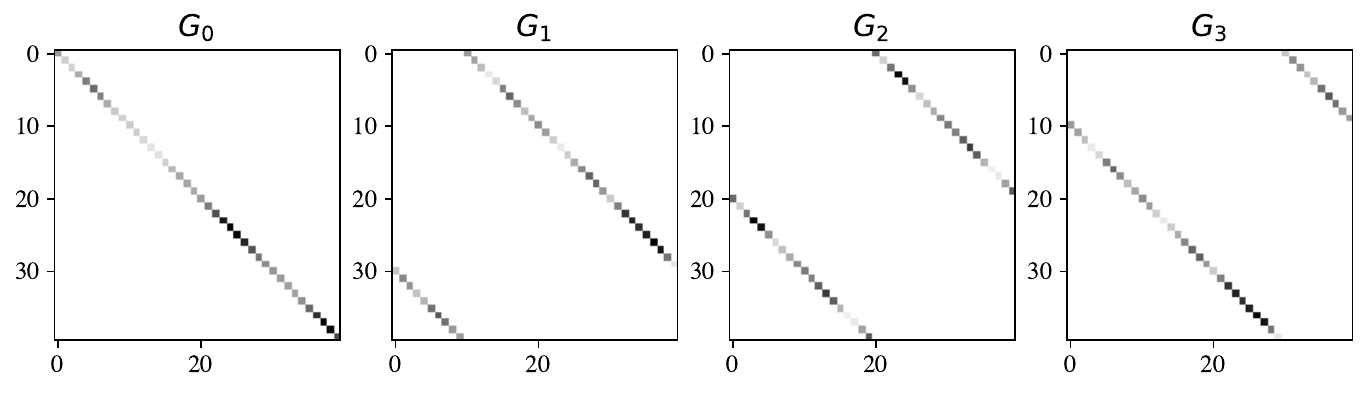}
    \caption{The aliasing terms $G_0,G_1,G_2,G_3$ isolated as the side diagonals of $\hat{S}$ for the right matrix ($d=4$) from above.
    }
    \label{fig:shat2}
\end{figure}

Figure \ref{fig:shat} and \ref{fig:shat2} illustrate the structure of $\hat{S}$ for a filterbank with random kernels and different decimation factors.
Setting $d=1$, we have that $\hat{S} = \operatorname{diag}\left( \sum_{j=1}^M \vert \widehat{w}_j \vert^2 \right) = \operatorname{diag}\left( G_0 \right)$.
In particular, $G_n\equiv 0$ for all $n>0$.
Hence, the stability of an undecimated filterbank is determined only by the filterbank response $G_0$. In particular, it is a tight g-frame if and only if the response is perfectly flat. The Walnut representation allows us to generalize these statements to $d>1$. By bounding the aliasing terms in different ways, we get frame bound estimates, and see that tightness is equivalent to a perfectly flat response, together with vanishing aliasing terms.

\subsection{Frame bound estimates and tightness characterizations}

For the estimates we are using a result that was originally formulated for filterbanks in $\ell^2(\mathbb{Z})$ in~\cite{balazs2017framespsycho}. In this setting, the aliasing terms are continuous functions on the torus, $\mathcal{G}_n\in L^2([0,1))$, given by
\begin{equation}
\mathcal{G}_n(\xi) = \sum_{j=0}^M d^{-1}\hat{g}_j(\xi)\overline{\hat{g}_j(\xi-nd^{-1})}.
\end{equation}
The frame bound estimates are obtained through diagonal dominance.
\begin{proposition}[G-frame formulation of \cite{balazs2017framespsycho}, Prop.~5]\label{lem:boundsl2}
    If there are $0<A\leq B<\infty$ with
    \begin{equation}
        A\leq \mathcal{G}_0(\xi) \pm \sum_{n=1}^{d-1} | \mathcal{G}_n(\xi)| \leq B
    \end{equation}
    for almost all $\xi\in [0,1)$, then the filterbank with filters $g_j \in \ell^2(\ZZ)$ and decimation factor $d$ is a g-frame for $\ell^2(\ZZ)$ with frame bounds $A,B$.
\end{proposition}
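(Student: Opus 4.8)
The plan is to transfer the frame inequality to the Fourier domain and then establish it by a diagonal dominance (Schur-type) estimate on the band-diagonal operator $\hat{\mathcal{S}}$. Since $\mathcal{F}$ is unitary and $\langle \mathcal{S}x,x\rangle = \|\Theta x\|^2$ is real and nonnegative, the claim $A\|x\|^2 \le \|\Theta x\|^2 \le B\|x\|^2$ is equivalent to $A\|\hat{x}\|^2 \le \langle \hat{\mathcal{S}}\hat{x},\hat{x}\rangle \le B\|\hat{x}\|^2$ for all $\hat{x}\in L^2([0,1))$. First I would invoke the $\ell^2(\ZZ)$ analog of the Walnut representation (Proposition \ref{prop:walnut}), which yields $\hat{\mathcal{S}}\hat{x}(\xi)=\sum_{n=0}^{d-1}\mathcal{G}_n(\xi)\,\hat{x}(\xi-n/d)$, and use it to expand the quadratic form as
\begin{equation*}
\langle \hat{\mathcal{S}}\hat{x},\hat{x}\rangle = \int_0^1 \mathcal{G}_0(\xi)|\hat{x}(\xi)|^2\,d\xi + \int_0^1 \overline{\hat{x}(\xi)}\sum_{n=1}^{d-1}\mathcal{G}_n(\xi)\,\hat{x}(\xi-n/d)\,d\xi,
\end{equation*}
separating the diagonal term $n=0$ (where $\mathcal{G}_0\ge 0$ is real) from the aliasing contributions $n\ge 1$.

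Next I would bound the off-diagonal part. Taking absolute values and applying the arithmetic--geometric mean inequality $|\hat{x}(\xi)|\,|\hat{x}(\xi-n/d)| \le \tfrac12|\hat{x}(\xi)|^2 + \tfrac12|\hat{x}(\xi-n/d)|^2$, each cross term splits into an unshifted integral against $|\hat{x}(\xi)|^2$ and a shifted integral against $|\hat{x}(\xi-n/d)|^2$. The key technical step is to bring the shifted integrals back into the same pointwise form: the substitution $\eta=\xi-n/d$ together with the symmetry $\mathcal{G}_n(\xi+n/d)=\overline{\mathcal{G}_{d-n}(\xi)}$ (immediate from the definition of $\mathcal{G}_n$ and the $1$-periodicity of the $\hat{g}_j$) turns the shifted integral for index $n$ into $\tfrac12\int_0^1 |\mathcal{G}_{d-n}(\eta)|\,|\hat{x}(\eta)|^2\,d\eta$. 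Since $n\mapsto d-n$ permutes $\{1,\dots,d-1\}$, summing over $n$ recombines everything into $\sum_{n=1}^{d-1}\int_0^1 |\mathcal{G}_n(\xi)|\,|\hat{x}(\xi)|^2\,d\xi$.

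From here the conclusion is immediate. Using that $\langle\hat{\mathcal{S}}\hat{x},\hat{x}\rangle$ is real, the estimate above gives
\begin{equation*}
\langle\hat{\mathcal{S}}\hat{x},\hat{x}\rangle \le \int_0^1\Big(\mathcal{G}_0(\xi)+\sum_{n=1}^{d-1}|\mathcal{G}_n(\xi)|\Big)|\hat{x}(\xi)|^2\,d\xi,
\end{equation*}
and the matching lower bound with a minus sign in front of the sum; applying the hypothesis $A\le \mathcal{G}_0(\xi)\pm\sum_{n=1}^{d-1}|\mathcal{G}_n(\xi)|\le B$ pointwise and then $\int_0^1|\hat{x}(\xi)|^2\,d\xi=\|x\|^2$ (Plancherel) delivers both frame bounds. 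I expect the main obstacle to be exactly the bookkeeping in the shifted terms: one must verify the conjugate-symmetry relation between $\mathcal{G}_n$ and $\mathcal{G}_{d-n}$ and confirm that the change of variables on the torus is exact (via $1$-periodicity), since this is precisely what lets the single-point hypothesis control the entire quadratic form. A secondary point worth checking is that the arithmetic--geometric mean bound is sharp enough to recover the stated constants $A,B$ rather than weaker ones.
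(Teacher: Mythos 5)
Your proof is correct and is essentially the argument behind the cited result: the paper itself defers to \cite{balazs2017framespsycho}, Prop.~5, whose standard proof is exactly your route via the $\ell^2(\ZZ)$ Walnut representation, the bound $2|ab|\leq |a|^2+|b|^2$ on the cross terms, and the symmetry $\mathcal{G}_n(\xi+n/d)=\overline{\mathcal{G}_{d-n}(\xi)}$ (which holds by $1$-periodicity of the $\hat{g}_j$, and which, as you note, recovers the constants $A,B$ exactly since $n\mapsto d-n$ permutes $\{1,\dots,d-1\}$). The only point you gloss over is the routine one of first justifying boundedness of $\Theta$ (e.g., on finitely supported sequences, using $|\mathcal{G}_n|\leq B$ a.e.) so that the quadratic-form identity $\langle\hat{\mathcal{S}}\hat{x},\hat{x}\rangle=\Vert\Theta x\Vert^2$ is legitimate before extending by density.
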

In the following theorem, we use Prop.~\ref{lem:boundsl2} to derive statements about the stability of a filterbank in $\CC^L$. We use the notation $\Vert x\Vert_\infty=\max_n |x[n]|$ and the fact that the average value of any $x\in \CC^L$ is given by $\frac{1}{\sqrt{L}}\hat{x}[0] = \frac{1}{L}\sum_{\ell=0}^{L-1}x[\ell]$.

\begin{theorem}\label{thm:main0}
    Let $\{(w_j)_{j=1}^M,\downarrow_d\}$ be a filterbank for $\CC^L$ and let $G_n$ be the associated aliasing terms. The following holds.
    \begin{enumerate}[(i)]
        \item Let
        \begin{align}
            A&= \min_k (G_0[k]-\sum_{n=1}^{d-1}|G_n[k]|)\\
            B&= \max_k (G_0[k]+\sum_{n=1}^{d-1}|G_n[k]|).
        \end{align}
        If $A>0$ then the filterbank is a g-frame for $\CC^{L}$ and $A,B$ are frame bounds.
        \item The filterbank is a tight g-frame for $\CC^{L}$ if and only if $G_0\equiv A$ for some $A>0$ and $G_n\equiv 0$ for every $n=1,\dots,d-1$. In particular, $A=\tfrac{1}{\sqrt{L}}\hat{G}_0[0]$.
        \item If the filterbank is a g-frame and $S$ is the frame operator then
        \begin{equation}
            \Vert S-  I_{L} \Vert \leq
            \Vert G_0-\mathbf{1}_L\Vert_\infty + \sum_{n=1}^{d-1} \Vert G_n \Vert_\infty,
        \end{equation}
        where $\mathbf{1}_L=(1,\dots,1)\in \CC^L$. Moreover, for $A,B$ as in $(ii)$ we have
        \begin{equation}
            \Vert S- \tfrac{1}{\sqrt{L}}\hat{G}_0[0]\cdot I_{L} \Vert\leq \max(|B-\tfrac{1}{\sqrt{L}}\hat{G}_0[0]|,|A-\tfrac{1}{\sqrt{L}}\hat{G}_0[0]|).
        \end{equation}
  \end{enumerate}    
\end{theorem}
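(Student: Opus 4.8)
The plan is to reduce all three parts to the behaviour of the self-adjoint quadratic form $\langle \hat{S}\hat{x},\hat{x}\rangle$, which equals $\Vert \Theta x\Vert^2 = \langle Sx,x\rangle$ since $F$ is unitary. Feeding the Walnut representation of Proposition~\ref{prop:walnut} into this form and splitting off the diagonal term $n=0$, I would write
\[
\langle \hat{S}\hat{x},\hat{x}\rangle = \sum_{\ell=0}^{L-1} G_0[\ell]\,|\hat{x}[\ell]|^2 + \sum_{n=1}^{d-1}\sum_{\ell=0}^{L-1} G_n[\ell]\,\overline{\hat{x}[\ell]}\,\hat{x}[\ell-n\tfrac{L}{d}],
\]
so that the aliasing terms $G_n$, $n\geq 1$, carry exactly the off-diagonal contributions. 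This is the finite-dimensional mirror of the diagonal-dominance argument behind Proposition~\ref{lem:boundsl2}, so I would reprove it directly in $\CC^L$ rather than transferring sampled bounds from $\ell^2(\ZZ)$.

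For part (i) I would bound the off-diagonal block by the elementary inequality $|\overline{\hat{x}[\ell]}\,\hat{x}[\ell-n\tfrac{L}{d}]|\leq \tfrac12(|\hat{x}[\ell]|^2+|\hat{x}[\ell-n\tfrac{L}{d}]|^2)$. The decisive point is the conjugate-symmetry identity $G_{d-n}[\ell-n\tfrac{L}{d}] = \overline{G_n[\ell]}$, which drops out of the definition of $G_n$ together with $\mathbf{T}_L = I_L$; it lets me re-index the ``shifted'' half of the estimate (via $\ell\mapsto \ell-n\tfrac{L}{d}$ and $n\mapsto d-n$) back onto $\sum_{n=1}^{d-1}|G_n[\ell]|\,|\hat{x}[\ell]|^2$. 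Collecting terms squeezes $\langle\hat{S}\hat{x},\hat{x}\rangle$ between $\sum_\ell(G_0[\ell]\mp\sum_{n=1}^{d-1}|G_n[\ell]|)|\hat{x}[\ell]|^2$, hence between $A\Vert x\Vert^2$ and $B\Vert x\Vert^2$ with the stated $A,B$; when $A>0$ this is precisely the g-frame statement, and it also records $\operatorname{spec}(S)\subseteq[A,B]$ for use in part (iii).

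For part (ii) I would invoke the preceding tightness lemma, by which the filterbank is tight iff $\hat{S}=A\cdot I_L$. Sufficiency is immediate: if $G_0\equiv A$ and $G_n\equiv 0$ for $n\geq1$, the Walnut representation collapses to $\hat{S}\hat{x}=G_0\odot\hat{x}=A\hat{x}$. For necessity I would read the matrix entries straight off Proposition~\ref{prop:walnut}: for each fixed row $\ell$ the columns $\ell-n\tfrac{L}{d}$, $n=0,\dots,d-1$, are distinct and carry exactly the values $G_n[\ell]$, so the diagonal is $G_0$ and the nontrivial bands are the $G_n$, $n\geq1$. Thus $\hat{S}=A\cdot I_L$ forces $G_0\equiv A$ and $G_n\equiv0$. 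Finally $A=\tfrac1{\sqrt L}\hat{G}_0[0]$ because the average of the constant vector $G_0\equiv A$ is $A$, and the average of any vector is $\tfrac1{\sqrt L}\hat{G}_0[0]$.

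For part (iii) both estimates rest on $\Vert S-c\,I_L\Vert = \Vert \hat{S}-c\,I_L\Vert$ (unitarity of $F$) and on self-adjointness, which makes the operator norm equal to the maximal value of $|\langle(\hat{S}-c\,I_L)\hat{x},\hat{x}\rangle|$ over unit vectors. For $c=1$ I would simply rerun the second-paragraph estimate with $G_0$ replaced by $G_0-\mathbf{1}_L$ on the diagonal, yielding the bound $\Vert G_0-\mathbf{1}_L\Vert_\infty+\sum_{n=1}^{d-1}\Vert G_n\Vert_\infty$. For $c=\tfrac1{\sqrt L}\hat{G}_0[0]$ I would instead use $\operatorname{spec}(S)\subseteq[A,B]$ from part~(i) (the frame bounds there): shifting gives $\operatorname{spec}(S-c\,I_L)\subseteq[A-c,B-c]$, whence $\Vert S-c\,I_L\Vert=\max(|A-c|,|B-c|)$. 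The only genuinely non-routine step throughout is establishing and correctly deploying the conjugate-symmetry/re-indexing identity that makes the aliasing cross-terms close back into the diagonal form $\sum_\ell|G_n[\ell]|\,|\hat{x}[\ell]|^2$; everything else is bookkeeping with the $d$-band structure of $\hat{S}$.
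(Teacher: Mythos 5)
Your proposal is correct, but for part (i) it takes a genuinely different route from the paper. The paper simply invokes the $\ell^2(\ZZ)$ diagonal-dominance result (Prop.~\ref{lem:boundsl2}, from \cite{balazs2017framespsycho}) and transfers the bounds to $\CC^L$; you instead reprove the bound self-contained in finite dimensions, estimating the quadratic form $\langle \hat{S}\hat{x},\hat{x}\rangle$ from the Walnut representation. Your key ingredient --- the conjugate-symmetry identity $G_{d-n}[\ell-n\tfrac{L}{d}]=\overline{G_n[\ell]}$, which follows from the definition of $G_n$ and $L$-periodicity, and which lets the re-indexed half of the AM--GM estimate $|\overline{\hat{x}[\ell]}\hat{x}[\ell-n\tfrac{L}{d}]|\leq\tfrac12(|\hat{x}[\ell]|^2+|\hat{x}[\ell-n\tfrac{L}{d}]|^2)$ close back onto $\sum_{n\geq1}|G_n[\ell]|\,|\hat{x}[\ell]|^2$ --- is verified correctly, and it is exactly what makes the band structure of $\hat{S}$ amenable to a Gershgorin-type argument without leaving $\CC^L$. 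What this buys you is avoiding the (slightly delicate) correspondence between the periodic finite-length filters and their $\ell^2(\ZZ)$ counterparts that the paper's citation implicitly relies on; what the paper's route buys is brevity, since the hard estimate is outsourced. Your treatments of (ii) and (iii) essentially mirror the paper: (ii) via $\hat{S}=A\cdot I_L$ and reading the bands off the Walnut representation (the paper is terser but identical in substance), and the second estimate of (iii) via the spectral inclusion $\operatorname{spec}(S)\subseteq[A,B]$, which is the paper's argument with $A\leq A^*\leq B^*\leq B$ made implicit; for the first estimate of (iii) you use self-adjointness and the quadratic form where the paper uses the triangle inequality and unitarity of translations --- both give the same bound. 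One cosmetic slip: in (iii) you write $\Vert S-c\,I_L\Vert=\max(|A-c|,|B-c|)$, but since $A,B$ need not be the optimal frame bounds this should be ``$\leq$''; the inequality is all the theorem claims and all your spectral containment delivers, so nothing breaks.
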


\begin{proof}
    The frame bound estimates in $(i)$ follow directly from the $\ell^2(\ZZ)$ version in Prop.~\ref{lem:boundsl2}. Point $(ii)$ follows from the Walnut representation (Prop.~\ref{prop:walnut}) and that $\tfrac{1}{\sqrt{L}}\hat{G}_0[0]$ is the average value of $G_0$. For the first estimate in $(iii)$ we use the Walnut representation and the triangle inequality to get
    \begin{align}
    \begin{split}
        \Vert (S- I_L)x \Vert &= \Vert (\hat{S}- I_L)\hat{x} \Vert
        \leq \left(\Vert G_0-\mathbf{1}_L \Vert_\infty + \sum_{n=1}^{d-1} \Vert G_n \Vert_\infty\right)\Vert \hat{x} \Vert.
    \end{split}
    \end{align}
    For the second estimate we use that
    \begin{equation}\label{eq:star}
        \Vert S-\tfrac{1}{\sqrt{L}}\hat{G}_0[0]\cdot I_L\Vert = \max(|A^*-\tfrac{1}{\sqrt{L}}\hat{G}_0[0]|,|B^*-\tfrac{1}{\sqrt{L}}\hat{G}_0[0]|),
    \end{equation}
    where $A^*,B^*$ are the optimal frame bounds of $S$. Let $A,B$ be as in $(ii)$ then $A\leq A^*\leq B^* \leq B$ holds. Since $A\leq \min_k G_0[k]\leq\tfrac{1}{\sqrt{L}}\hat{G}_0[0]$, replacing $A^*,B^*$ by $A,B$ in~\eqref{eq:star} remains true, and the claim follows. 
\end{proof}

These results naturally relate aliasing to the stability of a filterbank by quantifying how reduced aliasing leads to improved frame bounds. In particular, they show that tightness is achieved precisely when aliasing is completely absent.
However, the results do not take the kernel sizes $L_K$ of the filters into account, which limits their practicality for long or varying signal lengths. In the following, we present a length-independent version of Theorem \ref{thm:main0}, tailored to the structural constraints of convolutional layers. The conditions that we derive are only required to hold for the minimal signal length of $L= d\lceil \tfrac{2L_K-1}{d} \rceil$, and maintain valid for all longer signal lengths.

For the proof, we will use the classic result that shifted Dirichlet kernels are an orthonormal basis for the space of trigonometric polynomials.

\begin{lemma}[\cite{vetterli2014foundationsofsignalprocessing}, Thm. 5.24]\label{lem:1}
    Let $p$
    be a trigonometric polynomial of degree $\leq N$ and $D_n(t) = \sum_{|\ell|\leq n} e^{2\pi i \ell t}$ denote the $n$-th Dirichlet kernel. For any $N'\geq N$, $p$ can be written as the semi-discrete convolution
        \begin{equation}
            p(t) = \tfrac{1}{\sqrt{2N'+1}}\sum_{n = 0}^{2N'} p\left(\tfrac{n}{2N'+1}\right)D_{N'}\left(t -\tfrac{n}{2N'+1}\right).
        \end{equation}
    Moreover, the Fourier coefficients of $p$ coincide with the DFT coefficients of the vector $\left(p(0),p\left(\tfrac{1}{2N'+1}\right),\ldots, p\left(\tfrac{2N'}{2N'+1}\right)\right)$.
\end{lemma}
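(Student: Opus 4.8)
The plan is to use that both sides are trigonometric polynomials of degree at most $N'$ and to match their Fourier coefficients. Write $M = 2N'+1$ and $t_n = n/M$. The space $\mathcal{P}_{N'}$ of trigonometric polynomials of degree $\le N'$ is $M$-dimensional, and since $N \le N'$ the given $p$ lies in it; so does every shifted Dirichlet kernel, because $D_{N'}(t - t_n) = \sum_{|\ell|\le N'} e^{-2\pi i \ell t_n}\, e^{2\pi i \ell t}$. Hence the right-hand side is automatically an element of $\mathcal{P}_{N'}$, and to prove the identity it suffices to verify that its $\ell$-th Fourier coefficient equals $\hat{p}_\ell$ for every $|\ell| \le N'$.

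First I would read off the Fourier coefficients of the right-hand side. From the expansion above, the $\ell$-th Fourier coefficient of $D_{N'}(\cdot - t_n)$ is $e^{-2\pi i \ell t_n}$, so the $\ell$-th coefficient of the whole sum is proportional to $\sum_{n=0}^{M-1} p(t_n)\, e^{-2\pi i \ell n/M}$. This is exactly the (unnormalized) DFT of the sample vector $\bigl(p(t_0),\dots,p(t_{M-1})\bigr)$, which is why the same computation will also deliver the ``moreover'' statement.

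Next I would substitute $p(t_n) = \sum_{|m|\le N} \hat{p}_m\, e^{2\pi i m n/M}$ and interchange the two finite sums, reducing everything to the orthogonality of roots of unity: $\sum_{n=0}^{M-1} e^{2\pi i (m-\ell) n/M}$ equals $M$ if $m\equiv \ell \pmod{M}$ and $0$ otherwise. The decisive point — and the step I expect to be the main obstacle — is the range constraint that makes this collapse faithful: for $|m| \le N \le N'$ and $|\ell| \le N'$ one has $|m-\ell| \le 2N' < 2N'+1 = M$, so the congruence $m \equiv \ell \pmod{M}$ forces $m = \ell$. This is precisely the statement that $M=2N'+1$ equispaced samples are exactly enough to prevent any two in-range frequencies from aliasing onto each other. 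With it, the double sum collapses to the single term $\hat{p}_\ell$ (up to the global normalization in the statement), proving the identity; and since the same collapse gives $\sum_n p(t_n) e^{-2\pi i\ell n/M} = M\,\hat{p}_\ell$, the DFT of the samples recovers the Fourier coefficients (after the usual wrap-around of negative frequencies to indices $\ell \bmod M$), which is the ``moreover'' part.

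Conceptually, the cleanest way to see why exactly $2N'+1$ nodes suffice and where the prefactor originates is the orthonormal-basis structure underlying the formula. Combining the reproducing property $\int_0^1 p(s)\, D_{N'}(t-s)\,ds = p(t)$ on $\mathcal{P}_{N'}$ (immediate from orthogonality of the characters) with the sampled-kernel identity $D_{N'}\bigl((n-n')/M\bigr) = M\,\delta_{n n'}$ — which follows from $D_{N'}(s) = \sin(M\pi s)/\sin(\pi s)$ evaluated at $s = k/M$ — shows that the normalized shifts $\varphi_n = \tfrac{1}{\sqrt{2N'+1}}\, D_{N'}(\cdot - t_n)$ form an orthonormal basis of $\mathcal{P}_{N'}$, with expansion coefficients $\langle p, \varphi_n\rangle = \tfrac{1}{\sqrt{2N'+1}}\, p(t_n)$ by the reproducing property. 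Reassembling $p = \sum_n \langle p,\varphi_n\rangle \varphi_n$ is the structural reason the samples reconstruct $p$ and pins down the weight attached to each sample; the direct Fourier-matching argument above is then just the coefficient-level shadow of this orthonormal expansion.
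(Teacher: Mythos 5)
The paper offers no proof of this lemma at all: it is imported as a citation to \cite{vetterli2014foundationsofsignalprocessing} and used as a black box in the proof of Theorem~\ref{thm:main}, so your argument can only be judged on its own merits. Its structure is the standard and correct one: both sides lie in the $(2N'+1)$-dimensional space of trigonometric polynomials of degree $\leq N'$, matching Fourier coefficients reduces to the orthogonality relation $\sum_{n=0}^{M-1} e^{2\pi i (m-\ell)n/M} = M\,\delta_{m\ell}$ with $M=2N'+1$, and you correctly isolate the decisive point — $|m-\ell|\leq N+N'\leq 2N' < M$, so no two admissible frequencies alias modulo $M$ — which is exactly why $2N'+1$ equispaced samples suffice. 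The orthonormal-basis coda (reproducing property of $D_{N'}$ on this space plus $D_{N'}(k/M)=M\delta_{k\equiv 0}$) is also sound and gives the cleaner structural picture.

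The one place where you are not entitled to hedge is the constant, which you defer twice (``proportional to'', ``up to the global normalization in the statement''), and it is precisely there that the statement as printed is inconsistent. Carry your own computation through: with the raw kernel $D_{N'}$ and prefactor $1/\sqrt{M}$, the $\ell$-th Fourier coefficient of the right-hand side is $\tfrac{1}{\sqrt{M}}\cdot M\,\hat{p}_\ell = \sqrt{M}\,\hat{p}_\ell$, not $\hat{p}_\ell$; equivalently, unpacking $\varphi_n$ in your expansion $p=\sum_n \tfrac{1}{\sqrt{M}}p(t_n)\varphi_n$ yields the prefactor $1/M$, not $1/\sqrt{M}$. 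A one-line sanity check: for $p\equiv 1$ at $t=0$ the displayed right-hand side evaluates to $\tfrac{1}{\sqrt{M}}D_{N'}(0)=\sqrt{M}\neq 1$. So the identity holds with $1/(2N'+1)$ in front of $D_{N'}$, or with $1/\sqrt{2N'+1}$ in front of the unit-norm kernel $D_{N'}/\sqrt{2N'+1}$ — the printed lemma conflates the two; likewise the unitary DFT of the sample vector equals $\sqrt{M}$ times the Fourier coefficients, the same factor the paper consistently carries downstream (e.g., writing $\tfrac{1}{\sqrt{L}}\hat{G}_0[0]$ for the mean of $G_0$). This is a normalization slip in the quoted statement rather than a flaw in your approach, but a proof claiming to verify the displayed formula must track the constant and either flag or repair this discrepancy; ``up to the global normalization'' glosses over exactly the point where the statement as written fails.
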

The following theorem represents the main result of this paper.
\begin{theorem}\label{thm:main}
    Let $\{(w_j)_{j=1}^M,\downarrow_d\}$ with kernel size $L_K$ be a filterbank for $\CC^L$, where $L= d\lceil \tfrac{2L_K-1}{d} \rceil$. Let $G_n$ be the associated aliasing terms (of length $L$). For all $L'\geq L$ the following holds.
    \begin{enumerate}[(i)]
        \item
        Let
        \begin{align}
            A&= \tfrac{2}{\sqrt{L}}\hat{G}_0[0]-\sum_{n=0}^{d-1}\Vert \hat{G}_n \Vert_{1}\\
            B&= \sum_{n=0}^{d-1}\Vert \hat{G}_n \Vert_{1}.
        \end{align}
        If $A>0$ then the filterbank is a g-frame for $\CC^{L'}$ and $A,B$ are frame bounds.
        \item The filterbank is a tight g-frame for $\CC^{L'}$ if and only if $G_0\equiv A$ for some $A>0$ and $G_n\equiv 0$ for every $n=1,\dots,d-1$. In particular, $A=\tfrac{1}{\sqrt{L}}\hat{G}_0[0]$.
        \item If the filterbank is a g-frame for $\CC^{L}$ and $S$ the frame operator for $\CC^{L'}$ then
        \begin{equation}
            \Vert S- \tfrac{1}{\sqrt{L}}\hat{G}_0[0]\cdot  I_{L'} \Vert \leq \left\vert \tfrac{1}{\sqrt{L}}\hat{G}_0[0]-\sum_{n=0}^{d-1}\Vert \hat{G}_n \Vert_1 \right\vert.
        \end{equation}
  \end{enumerate}    
\end{theorem}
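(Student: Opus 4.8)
The plan is to reduce the entire statement to Theorem~\ref{thm:main0} together with the fact that, since each kernel occupies only $L_K$ coordinates, the aliasing terms are samples of trigonometric polynomials of controlled degree, so that Lemma~\ref{lem:1} renders the relevant quantities independent of the signal length. In the time domain, $G_n = d\inv\sum_{j} \hat w_j\odot\overline{\mathbf{T}_{n\frac{L}{d}}\hat w_j}$ is the transform of a sum of cross-correlations of each $w_j$ with a modulate of itself, hence of a sequence supported on at most $2L_K-1$ consecutive indices. Writing $G_n[k]$ as a normalization of $P_n(k/L)$ for a trigonometric polynomial $P_n$ of degree $\le L_K-1$, the choice $L = d\lceil\tfrac{2L_K-1}{d}\rceil$ guarantees both that $d\mid L$, so decimation is well defined, and that $L\ge 2L_K-1$, i.e.\ at least $2(L_K-1)+1$ equispaced samples. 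By Lemma~\ref{lem:1} these samples determine $P_n$ exactly: the entries of $\hat G_n$ are (up to normalization) its Fourier coefficients, whence $\Vert P_n\Vert_\infty \le \Vert \hat G_n\Vert_1$, and $\tfrac{1}{\sqrt L}\hat G_0[0]$ is the exact mean of $P_0$, which does not depend on the sampling length.

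For fixed $L'\ge L$ I would pass to length $L'$. The length-$L'$ filterbank uses the same kernels, so its aliasing terms $G_n'$ are samples of the very same $P_n$ on the finer grid $k/L'$; in particular $\tfrac{1}{\sqrt{L'}}\hat G_0'[0] = \tfrac{1}{\sqrt L}\hat G_0[0]$ and $\max_k |G_n'[k]| \le \Vert P_n\Vert_\infty \le \Vert \hat G_n\Vert_1$. Applying Theorem~\ref{thm:main0}$(i)$ in $\CC^{L'}$ produces optimal bounds $A^\ast,B^\ast$ with $\min_k\bigl(G_0'[k]-\sum_{n=1}^{d-1}|G_n'[k]|\bigr)\le A^\ast\le B^\ast\le \max_k\bigl(G_0'[k]+\sum_{n=1}^{d-1}|G_n'[k]|\bigr)$. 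Bounding each term of the right-hand maximum by its $\ell^1$ mass gives $B^\ast \le \sum_{n=0}^{d-1}\Vert\hat G_n\Vert_1 = B$. The lower bound is the one genuinely new estimate: since $G_0'\ge 0$ has mean $\tfrac{1}{\sqrt L}\hat G_0[0]$, the fluctuation of $P_0$ about its mean has sup-norm at most $\Vert\hat G_0\Vert_1$ minus that (nonnegative) mean, so $\min_k G_0'[k]\ge \tfrac{2}{\sqrt L}\hat G_0[0]-\Vert\hat G_0\Vert_1$; subtracting $\sum_{n=1}^{d-1}\Vert\hat G_n\Vert_1$ yields $A^\ast\ge A$, which is exactly the claimed $A$. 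This is the origin of the factor $2$.

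Part $(ii)$ then follows because ``$G_0\equiv A$'' and ``$G_n\equiv 0$'' are assertions about the polynomials $P_n$, so they hold at length $L$ if and only if they hold at every $L'\ge L$; combined with the tightness characterization of Theorem~\ref{thm:main0}$(ii)$ applied in $\CC^{L'}$ this gives the length-independent statement with $A=\tfrac{1}{\sqrt L}\hat G_0[0]$. For the operator-norm bound $(iii)$ the key algebraic observation is that, directly from the definitions in $(i)$, $A + B = \tfrac{2}{\sqrt L}\hat G_0[0]$, so the centering scalar $c:=\tfrac{1}{\sqrt L}\hat G_0[0]$ is exactly the midpoint $\tfrac{A+B}{2}$ of $[A,B]$. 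As $S$ on $\CC^{L'}$ is self-adjoint, its spectrum is $[A^\ast,B^\ast]\subseteq[A,B]$, so $S-cI_{L'}$ is self-adjoint with spectrum in $[A-c,B-c]$ and $\Vert S-cI_{L'}\Vert \le \max(|A-c|,|B-c|)$. Since $c=\tfrac{A+B}{2}$ this maximum equals $B-c = \sum_{n=0}^{d-1}\Vert\hat G_n\Vert_1 - \tfrac{1}{\sqrt L}\hat G_0[0]$, and because $G_0\ge 0$ forces $B\ge c\ge 0$ this is $\bigl|\tfrac{1}{\sqrt L}\hat G_0[0]-\sum_{n=0}^{d-1}\Vert\hat G_n\Vert_1\bigr|$, as asserted.

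The main obstacle is not the eigenvalue argument of $(iii)$, which is purely formal once $(i)$ is in place, but the length-independence engine behind $(i)$–$(ii)$: one must justify rigorously that the length-$L$ and length-$L'$ aliasing terms are samples of the same trigonometric polynomials, that Lemma~\ref{lem:1} applies with precisely the number of samples forced by $L=d\lceil\tfrac{2L_K-1}{d}\rceil$, and — most delicately — keep exact track of the normalization constants relating the DFT entries $\hat G_n$, the Fourier coefficients of $P_n$, the mean $\tfrac{1}{\sqrt L}\hat G_0[0]$, and the sup-norm bound $\Vert P_n\Vert_\infty\le\Vert\hat G_n\Vert_1$. Pinning down these constants is what makes the factor $2$ and the clean midpoint identity come out exactly, and it is where I would expect the bookkeeping to be most error-prone.
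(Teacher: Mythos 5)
Your proposal follows essentially the same route as the paper's proof: the length-independence engine via Lemma~\ref{lem:1} (the aliasing terms as trigonometric polynomials of degree $\le L_K-1$ determined by $L=d\lceil\tfrac{2L_K-1}{d}\rceil$ samples), the bound $\Vert\mathcal{G}_n\Vert_\infty\le\Vert\hat G_n\Vert_1$ applied to the diagonal-dominance estimates of Prop.~\ref{lem:boundsl2}/Thm.~\ref{thm:main0}, the ``mean minus deviation'' step $\min G_0 \ge \tfrac{2}{\sqrt L}\hat G_0[0]-\Vert\hat G_0\Vert_1$ producing the factor $2$, polynomial transfer across lengths for $(ii)$, and the spectral midpoint argument $\Vert S-cI_{L'}\Vert\le\max(|A-c|,|B-c|)$ with $c=\tfrac{A+B}{2}$ for $(iii)$. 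Your explicit observation that $c$ is the midpoint of $[A,B]$ is a slightly cleaner packaging of the paper's final computation, but the argument is the same.
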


\begin{proof}
Let us assume that the filters $w_j$ have kernel size $L_K=2N+1$ for simplicity. Now let $g_j\in \ell^2(\ZZ)$ be the filters that coincide with the kernels of the $w_j$ on $[-N,N]$. The $\hat{g}_j$ are trigonometric polynomials of degree $\leq N$, hence, the summands in the associated aliasing terms $\mathcal{G}_n$ are all of the form 
$$
h_j(\xi) = \hat{g}_j(\xi)\cdot\overline{ \hat{g}_j(\xi - \omega)}, 
$$
with appropriate shifts $\omega$. Since the $h_j$ are trigonometric polynomials of degree $\leq 2N$, every $\mathcal{G}_n$ is likewise a trigonometric polynomial of degree $\leq 2N$.
The core argument for this proof is now provided by Lemma \ref{lem:1}, giving us for all $L'\geq 2N$ and $n=0,\ldots,d-1$, that
\begin{align}
    \mathcal{G}_n(\xi)
    = \tfrac{1}{\sqrt{2L'+1}}\sum_{\ell = 0}^{2L'} \mathcal{G}_n\left(\tfrac{\ell}{2L'+1}\right)D_{L'}\left(\xi -\tfrac{\ell}{2L'+1}\right),
\end{align}
and that the (non-zero) Fourier coefficients of $\mathcal{G}_n$ are given by $\hat{G}_n$. Assuming that $L_K=2N+1$, the minimum number of samples needed is $L=4N+1=2L_K-1$. Taking the divisibility condition for the decimation factors into account gives us $d\lceil \tfrac{2L_K-1}{d} \rceil$ as smallest admissible signal length. In other words, the non-zero Fourier coefficients of the aliasing terms for the $\ell^2(\mathbb{Z})$ extension of the filterbank, hence, also those for $\CC^{L'}$ where $L'\geq L$ are already completely determined by $\hat{G}_n$ (of length $L$).

To prove $(i)$, we approximate the estimates from Lemma \ref{lem:boundsl2} using that
\begin{equation}
\Vert\mathcal{G}_n\Vert_{L^\infty([0,1))}\leq\Vert\hat{\mathcal{G}}_n\Vert_{L^1([0,1))}=\Vert\hat{G}_n\Vert_{1}.
\end{equation}
For the upper bound, we obtain
\begin{align}
    \sup_{\xi\in [0,1)}&\left( \mathcal{G}_0(\xi) + \sum_{n=1}^{d-1}|\mathcal{G}_n(\xi)| \right)
    = \sup_{\xi\in [0,1)}\left( \sum_{n=0}^{d-1}|\mathcal{G}_n(\xi)| \right)\\
    &\leq \sum_{n=0}^{d-1}\Vert \mathcal{G}_n \Vert_{L^\infty([0,1))}
    \leq \sum_{n=0}^{d-1}\Vert \hat{G}_n \Vert_{1}.
\end{align}
For the lower bound, we use that
$\Vert \hat{G}_0 \Vert_1 - \tfrac{1}{\sqrt{L}}\hat{G}_0[0]$ bounds the maximal deviation in $\mathcal{G}_0$. We get
\begin{align}
    \inf_{\xi\in [0,1)}&\left( \mathcal{G}_0(\xi) - \sum_{n=1}^{d-1}|\mathcal{G}_n(\xi)| \right)
    \geq \inf_{\xi\in [0,1)}\left( \mathcal{G}_0(\xi) \right) - \sum_{n=1}^{d-1}\Vert \mathcal{G}_n \Vert_{L^\infty([0,1))}\\
    &\geq \tfrac{1}{\sqrt{L}}\hat{G}_0[0]- \left(\Vert \hat{G}_0 \Vert_1 - \tfrac{1}{\sqrt{L}}\hat{G}_0[0]\right) - \sum_{n=1}^{d-1}\Vert \hat{G}_n \Vert_{1}\\
    &= \tfrac{2}{\sqrt{L}}\hat{G}_0[0]-\sum_{n=0}^{d-1}\Vert \hat{G}_n \Vert_{1}.
\end{align}

To prove $(ii)$, we use that 
\begin{equation}\label{eq:bn}
\|\mathcal{G}_n\|_{L^2([0,1))} = \|\hat{G}_n\| = \|G_n\|,
\end{equation}
where the norms without subscript are Euclidean vector norms. Assuming that $\{(w_j)_{j=1}^M,\downarrow_d\}$ is tight on $\CC^{L'}$, then the diagonal entries of $\hat{S}$ are uniquely determined by the $L$ samples given by $G_0$, hence, they must all be equal. On the other hand, by \eqref{eq:bn}, all the $G_n$, $n> 0$ must vanish.

For the estimate in $(iii)$ we use that
\begin{equation}
    \Vert S-\tfrac{1}{\sqrt{L}}\hat{G}_0[0]\cdot I_{L'}) \Vert = \max(|A^*-\tfrac{1}{\sqrt{L}}\hat{G}_0[0]|,|B^*-\tfrac{1}{\sqrt{L}}\hat{G}_0[0]|),
\end{equation}
where $A^*,B^*$ are the optimal frame bounds of $S$. Let $A,B$ be as in $(i)$ then $A\leq A^*\leq B^* \leq B$. We obtain
\begin{align}
    \begin{split}
        \Vert S-\tfrac{1}{\sqrt{L}}\hat{G}_0[0]\cdot I_{L'}) \Vert &\leq \max\left(\tfrac{1}{\sqrt{L}}\hat{G}_0[0]-\sum_{n=0}^{d-1}\Vert \hat{G}_n \Vert_{1}, \sum_{n=0}^{d-1}\Vert \hat{G}_n \Vert_{1} -\tfrac{1}{\sqrt{L}}\hat{G}_0[0] \right) \\&= \left\vert \tfrac{1}{\sqrt{L}}\hat{G}_0[0] - \sum_{n=0}^{d-1}\Vert \hat{G}_n \Vert_1 \right\vert
    \end{split}
    \end{align}
\end{proof}

While the estimates in Thm.~\ref{thm:main} are looser than the ones in Thm.~\ref{thm:main0} due to the additional estimation step via the 1-norm, we have been able to detach the frame bounds from the signal length. As a consequence, all the conditions for tightness can be reduced to this minimal signal length of essentially twice the kernel size.

\begin{corollary}\label{cor:tight1} 
    Let $\{(w_j)_{j=1}^M,\downarrow_d\}$ with kernel size $L_K$ be a filterbank for $\CC^L$, where $L= d\lceil \tfrac{2L_K-1}{d} \rceil$. Let $G_n$ be the associated aliasing terms and $S$ the frame operator. The following are equivalent.
    \begin{enumerate}[(i)]
        \item The filterbank is tight on $\CC^{L'}$ for all $L'\geq L$.
        \item $S=\hat{S}=\tfrac{1}{\sqrt{L}}\hat{G}_0[0]\cdot I_L$
        \item $\frac{\max_k G_0[k ]}{\min_k G_0[k ]} + \sum_{n=1}^{d-1} \Vert G_n \Vert_\infty = 1$
        \item $\sum_{n=0}^{d-1}\Vert \hat{G}_n \Vert_1=\tfrac{1}{\sqrt{L}}\hat{G}_0[0].$
    \end{enumerate}
\end{corollary}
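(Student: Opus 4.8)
The plan is to route all four conditions through the tightness characterization already established in Theorem~\ref{thm:main}(ii). Writing $\mu = \tfrac{1}{\sqrt{L}}\hat{G}_0[0]$ for the average value of $G_0$ (equivalently, by a trace computation, the average eigenvalue of $\hat S$, since $\operatorname{tr}\hat S=\sum_k G_0[k]=\sqrt L\,\hat G_0[0]$), that characterization reads: the filterbank is tight if and only if $G_0\equiv\mu$ and $G_n\equiv 0$ for every $n=1,\dots,d-1$. I abbreviate this pair of conditions as $(\star)$. Since $(\star)$ is phrased purely in terms of the length-$L$ aliasing terms, and Lemma~\ref{lem:1} guarantees that these already determine the aliasing terms at every $L'\geq L$, any equivalence with $(\star)$ holds uniformly in $L'$. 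I will therefore show that each of $(i)$–$(iv)$ is equivalent to $(\star)$.

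The equivalence of $(i)$ with $(\star)$ is exactly Theorem~\ref{thm:main}(ii), which is already stated for all $L'\geq L$. For $(ii)$ I would use the Walnut representation (Prop.~\ref{prop:walnut}), by which $\hat S\hat x=\sum_{n=0}^{d-1}G_n\odot\mathbf{T}_{nL/d}\hat x$; the $n=0$ term is pointwise multiplication by $G_0$ and the terms $n\geq 1$ involve genuine shifts, so $\hat S=\mu\,I_L$ holds precisely when $G_0\equiv\mu$ and all higher $G_n$ vanish, and since $F$ is unitary this is equivalent to $S=\mu\,I_L$. For $(iii)$ the key observation is that both summands on the left-hand side sit at or above their minima: $\max_k G_0[k]/\min_k G_0[k]\geq 1$ because $G_0\geq 0$, while $\sum_{n=1}^{d-1}\Vert G_n\Vert_\infty\geq 0$. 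Their sum equals $1$ exactly when each term attains its bound, i.e. when $G_0$ is constant (which forces $\min_k G_0[k]>0$, ruling out the degenerate all-zero filterbank) and every higher aliasing term vanishes — precisely $(\star)$.

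For $(iv)$, one direction is a direct evaluation: under $(\star)$ the vector $\hat G_0$ reduces to its DC entry and every $\hat G_n$ with $n\geq 1$ is zero, so $\sum_{n=0}^{d-1}\Vert\hat G_n\Vert_1=\Vert\hat G_0\Vert_1=\mu$. For the converse I would invoke Theorem~\ref{thm:main}(iii): assuming $(iv)$, the lower bound from Theorem~\ref{thm:main}(i) becomes $A=\tfrac{2}{\sqrt L}\hat G_0[0]-\sum_{n=0}^{d-1}\Vert\hat G_n\Vert_1=2\mu-\mu=\mu>0$ (positive for any nonzero filterbank), so the filterbank is a g-frame and Theorem~\ref{thm:main}(iii) gives $\Vert S-\mu\,I_{L'}\Vert\leq\big\vert\mu-\sum_{n=0}^{d-1}\Vert\hat G_n\Vert_1\big\vert=0$, whence $S=\mu\,I_{L'}$, i.e. $(ii)$.

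The step needing the most care is this converse to $(iv)$, where the single scalar equality $\sum_n\Vert\hat G_n\Vert_1=\mu$ must be promoted to the full operator identity $S=\mu I$. The cleanest route is the estimate of Theorem~\ref{thm:main}(iii) above; an alternative is the eigenvalue squeeze $\mu\leq B^{*}\leq B=\mu$, where $B^{*}$ is the top eigenvalue of $\hat S$ and $B=\sum_n\Vert\hat G_n\Vert_1$ is the diagonal-dominance bound, which forces every eigenvalue of the positive semidefinite $\hat S$ to equal its average $\mu$. Either way, the delicate point is keeping the normalization of $\Vert\hat G_n\Vert_1$ consistent with the convention fixed in Theorem~\ref{thm:main}, so that $\Vert\hat G_0\Vert_1=\mu$ in the tight case; once that is pinned down, the remaining implications are routine.
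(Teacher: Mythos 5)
Your proof is correct and takes essentially the route the paper intends: the corollary is stated without a separate proof, as an immediate consequence of Theorem~\ref{thm:main} together with the Walnut representation (Prop.~\ref{prop:walnut}), which is exactly how you reduce each of (i)--(iv) to the condition $G_0\equiv\mu$, $G_n\equiv 0$ and use Theorem~\ref{thm:main}(iii) for the only nontrivial converse. Your closing caveat is also well placed: with the unitary DFT a constant $G_0\equiv\mu$ gives $\Vert\hat G_0\Vert_1=\sqrt{L}\,\mu$, so condition (iv) is consistent only when $\Vert\hat G_n\Vert_1$ is read as the $\ell^1$ norm of the Fourier coefficients of $\mathcal{G}_n$, i.e.\ of $\tfrac{1}{\sqrt{L}}\hat G_n$, which is the convention implicitly used in the proof of Theorem~\ref{thm:main}.
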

While $(ii)$ further specifies the standard tightness condition via the actual value of the bound, the conditions in $(iii)$ and $(iv)$ fully leverage the structure of $\hat{S}$ and only depend on the aliasing terms $G_n$. In particular, $(iii)$ concisely characterizes tightness by aliasing cancellation plus response equalization (note that also any other vector norm can be considered instead of $\Vert G_n \Vert_\infty$).
Condition $(iv)$ further specifies this on the Fourier level of the aliasing terms, thereby, elegantly describing aliasing cancellation in a simple formula. Note that since the $G_n$ are objects in the Fourier domain, $(iv)$ actually becomes a time-domain condition again. Moreover, since it already involves the tightness bound explicitly, we obtain Parseval stability for free.

\subsection{Painless hybrid filterbanks}

A further generalization of tightness is when $\hat{S}$ is diagonal matrix, but not necessarily with constant entries.
This situation is known as the \textit{painless} case~\cite{daubechies1986painless}, and stability obviously only depends on $G_0$ anymore, just as in the undecimated case. Classically, this is enforced by designing the filters in a way that they have limited band-pass, defined by the reciprocal decimation factor~\cite{balazs2011nonstat}. A simple example is to use ideal band-pass filters $\psi_j$ that satisfy 
\begin{equation}\label{eq:box}
    \hat{\psi}_j[k] = \begin{cases}
        1&\text{if } k\in [aj, aj+\tfrac{L}{d}-1]\\
        0&\text{else,}
    \end{cases}
\end{equation}
for some $a\leq \tfrac{L}{d}$. To make use of this in a strided convolutional layer we can combine the kernels with fixed ideal band-pass filters via pair-wise convolution. Let $(\psi_j)_{j=1}^M$ and $(w_j)_{j=1}^M$ be filters. The filterbank given by 
\begin{equation}
    \{(\psi_j \ast  w_j)_{j=1}^M,\downarrow_d\}.
\end{equation}
is called a \textit{hybrid filterbank}~\cite{haider2024holdmetight}. It is easy to see that a hybrid filterbank inherits the painless property from the ideal band-pass filters.

\begin{proposition}
    Let $\{(\psi_j \ast  w_j)_{j=1}^M,\downarrow_d\}$ be a hybrid filterbank for $\CC^L$ composed of ideal band-pass filters $(\psi_j)_{j=1}^M$ satisfying \eqref{eq:box} for $a=\tfrac{L}{d}$ and filters $(w_j)_{j=1}^M$. The hybrid filterbank is painless and
    \begin{enumerate}[(i)]
        \item a g-frame for $\CC^L$ if and only if $\min_k|\hat{w}_{\lfloor\frac{kd}{L}\rfloor} [k ]|>0$.
        \item a tight g-frame for $\CC^L$ if and only if $\vert\hat{w}_{\lfloor\frac{kd}{L}\rfloor} [k ]\vert=\vert\hat{w}_{\lfloor\frac{\ell d}{L}\rfloor} [\ell ]\vert\neq 0$ for all $k,\ell=0,\dots, L-1$.
    \end{enumerate}
\end{proposition}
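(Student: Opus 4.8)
The plan is to exploit the block structure that the ideal band-pass filters $\psi_j$ impose in the Fourier domain. First I would record that $\widehat{\psi_j \ast w_j} = \hat{\psi}_j \odot \hat{w}_j$, so that the effective filter in channel $j$ is simply $\hat{w}_j$ restricted to the frequency band $\mathrm{supp}(\hat{\psi}_j)$. With $a = \tfrac{L}{d}$, condition \eqref{eq:box} makes $\hat{\psi}_j$ the $0/1$-indicator of a block of $\tfrac{L}{d}$ consecutive frequencies, and as $j$ ranges over the channels these $d$ disjoint blocks tile $\{0,\dots,L-1\}$; the unique block containing a given index $k$ is the one with index $\lfloor kd/L\rfloor$. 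This identification of the covering block is what produces the floor function appearing in the statement.

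The key step is to feed these effective filters into the Walnut representation (Prop.~\ref{prop:walnut}) and show that all off-diagonal aliasing terms vanish. For $n = 1,\dots,d-1$ the term $G_n[k] = d^{-1}\sum_{j}(\hat{\psi}_j\odot\hat{w}_j)[k]\,\overline{(\hat{\psi}_j\odot\hat{w}_j)[k-n\tfrac{L}{d}]}$ can be nonzero only if, for some $j$, both $k$ and $k-n\tfrac{L}{d}$ lie in $\mathrm{supp}(\hat{\psi}_j)$. But translation by $n\tfrac{L}{d}$ maps the block of index $j$ to the block of index $j+n \bmod d$, so this would force $k$ to lie in two distinct blocks at once (distinct since $1\le n\le d-1$); as the blocks are disjoint, this is impossible and hence $G_n\equiv 0$. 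Thus $\hat{S}=\operatorname{diag}(G_0)$ is diagonal, i.e. the hybrid filterbank is painless. Using $|\hat{\psi}_j[k]|^2 = \hat{\psi}_j[k]$ together with the partition property, the diagonal evaluates to $G_0[k] = d^{-1}\,|\hat{w}_{\lfloor kd/L\rfloor}[k]|^2$.

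For (i) and (ii) I would invoke the fact from Section~\ref{sec:basics} that, for a (self-adjoint) diagonal frame operator, the diagonal entries are exactly the eigenvalues of $S$, and the smallest and largest among them are the optimal frame bounds. Consequently the filterbank is a g-frame precisely when $A=\min_k G_0[k]>0$; dropping the irrelevant positive factor $d^{-1}$ and the square, this is equivalent to $\min_k|\hat{w}_{\lfloor kd/L\rfloor}[k]|>0$, which is (i). For (ii), tightness means all eigenvalues coincide at one common positive value, i.e. $G_0[k]$ is constant and nonzero in $k$; translating this back through $G_0[k] = d^{-1}|\hat{w}_{\lfloor kd/L\rfloor}[k]|^2$ gives $|\hat{w}_{\lfloor kd/L\rfloor}[k]| = |\hat{w}_{\lfloor \ell d/L\rfloor}[\ell]| \neq 0$ for all $k,\ell$, which is (ii).

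The main obstacle is purely bookkeeping of indices: the only genuinely delicate point is verifying the tiling claim for the supports of the $\hat{\psi}_j$ and the modular shift arithmetic (that translating by $n\tfrac{L}{d}$ sends block $j$ to block $j+n \bmod d$), so that the disjoint-support cancellation of the $G_n$ is airtight. Once that is pinned down, both equivalences follow immediately from reading off the diagonal of $\hat{S}$.
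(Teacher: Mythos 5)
Your proposal is correct and takes essentially the same route as the paper's proof: both plug the hybrid filters into the Walnut representation, use the disjointness of the band-pass supports under translation by $n\tfrac{L}{d}$ to conclude $G_n\equiv 0$ for $n\geq 1$, and reduce (i) and (ii) to reading off the diagonal $G_0[k]=d^{-1}\vert\hat{w}_{\lfloor kd/L\rfloor}[k]\vert^2$ as the eigenvalues of $S$. The only difference is cosmetic: you spell out the block-tiling and modular shift arithmetic that the paper leaves implicit in the one-line assertion $\hat{\psi}_j[k]\overline{\hat{\psi}_j[k-n\tfrac{L}{d}]}=0$.
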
 

\begin{proof}
    The aliasing terms for a hybrid filterbank are given by
    \begin{equation}
        G_{n}^{(\psi,w)}[k] = \sum_{j=1}^{M} d\inv \hat{\psi}_j [k] \overline{\hat{\psi}_j [k-n\tfrac{L}{d}]}  \hat{w}_j [k] \overline{\hat{w}_j [k-n\tfrac{L}{d}]}.
    \end{equation}
    By \eqref{eq:box} we have for all $j=1,\dots, M$, $k=0,\dots, L-1$,
    \begin{equation}
        \hat{\psi}_j [k] \overline{\hat{\psi}_j [k-n\tfrac{L}{d}]} = 0.
    \end{equation}
    It follows that $G_{n}^{(\psi,w)}\equiv 0$ for $n>0$. In the diagonal term, only the non-overlapping squared frequency responses remain such that we get
    $$G_{0}^{(\psi,w)}[k] = d\inv \vert \hat{w}_{\lfloor\frac{kd}{L}\rfloor} [k ] \vert^2.$$
    The claims follow directly from this.
\end{proof}
An analog statement can be done for $a< \tfrac{L}{d}$, where the condition depends on the sum of all overlapping $\hat{w}_j$.
In other words, we split up the input signal into different frequency regions and process the resulting sub-band signals further, separately. This principle is very useful for audio processing and has been already leveraged for different architectures and applications~\cite{haider2024holdmetight, zeghidour2021leaf, ravanelli2018sincnet, lostanlen2023murenn}. In another context, the fixed decomposition into sub-bands has been shown to stabilize the layer in a random setting~\cite{lostanlen2025reshyb}, and to be advantageous for making deconvolution more robust~\cite{nenov2025snake}.

\subsection{Numerical experiments on suppressing aliasing}\label{sec:unlearn}

Parseval stability in neural networks has been used against exploding gradients and for enhancing the robustness against noise and adversarial attacks.
To maintain this during training, a sequence of works~\cite{brock2017orthoreg, cisse2017parseval,wang2020ocnn, massart2022rectorth} has proposed to minimize
\begin{equation}\label{eq:regS}
    \mathcal{L}^S = \Vert S -  I_L \Vert,
\end{equation}
(or some variant thereof) for all desired layers, alongside a learning objective. Other possibilities to maintain tightness are to minimize $\mathcal{L}^\kappa=\frac{B}{A}-1$~\cite{balazs2024stableencoders}
or $\mathcal{L}^\Theta=\frac{1}{2}(\Vert \Theta\Vert_2^2-\frac{1}{L}\Vert \Theta\Vert_F^2)$~\cite{nenov2024smoothsailing}.
By Corollary \ref{cor:tight1}, we can promote tightness also by suppressing aliasing via minimizing
\begin{equation}\label{eq:reg0}
    \mathcal{L}^G = \frac{\max_k G_0[k]}{\min_k G_0[k]} -1 +\sum_{n=1}^{d-1}\Vert G_n \Vert 
\end{equation}
or
\begin{equation}\label{eq:reg}
    \mathcal{L}^{\hat{G}} = \left|\tfrac{1}{\sqrt{L}}\hat{G}_0[0]-\sum_{n=0}^{d-1}\Vert \hat{G}_n \Vert_1\right|.
\end{equation}
The above expressions can be computed very efficiently using fast Fourier transforms. The computational complexity for $\mathcal{L}^{G}$ is $\mathcal{O}\left(M L \log L + M d L\right)$, and for $\mathcal{L}^{\hat{G}}$ it is $\mathcal{O}\left((M+d) L \log L + M d L\right)$. In comparison, for $\mathcal{L}^S$ it is $\mathcal{O}\left(\frac{M L^3}{d} + L^3\right)$.\\

As a demonstration, we optimize a linear convolutional layer with $M$ kernels of size $L_K$ and stride $d$ to become approximately Parseval stable for $\CC^L$, $L={d\lceil \tfrac{2L_K-1}{d} \rceil}$ by minimizing the proposed objectives $\mathcal{L}^G$ \eqref{eq:reg0} and $\mathcal{L}^{\hat{G}}$ \eqref{eq:reg}, and compare it to the baseline $\mathcal{L}^S$ \eqref{eq:regS}. We do not consider $\mathcal{L}^\kappa$ and $\mathcal{L}^\Theta$ here since they yield similar results but are very slow. However, we consider a brute-force method based on the construction of the canonical Parseval g-frame as a reference.
Given $(T_j)_{j=1}^M$, then $(T_jS^{-\frac{1}{2}})_{j=1}^M$ is the closest Parseval g-frame in the Frobenius norm among all Parseval g-frames~\cite{ehler2015pre}. Note that this corresponds to the filterbank $\{(S^{-\frac{1}{2}}w_j)_{j=1}^M,\downarrow_d\}$. Since applying $S^{-\frac{1}{2}}$ does not maintain the kernel size, we project the filters back onto the first $L_K$ components and repeat the process. This idea is similar to~\cite[Sec.~8.3]{strohmer1998gaborbook}. We call this FIR-tightening.

\begin{figure}[ht!]
    \centering
    \begin{subfigure}[t]{0.49\textwidth}
        \centering
        \includegraphics[width=\textwidth]{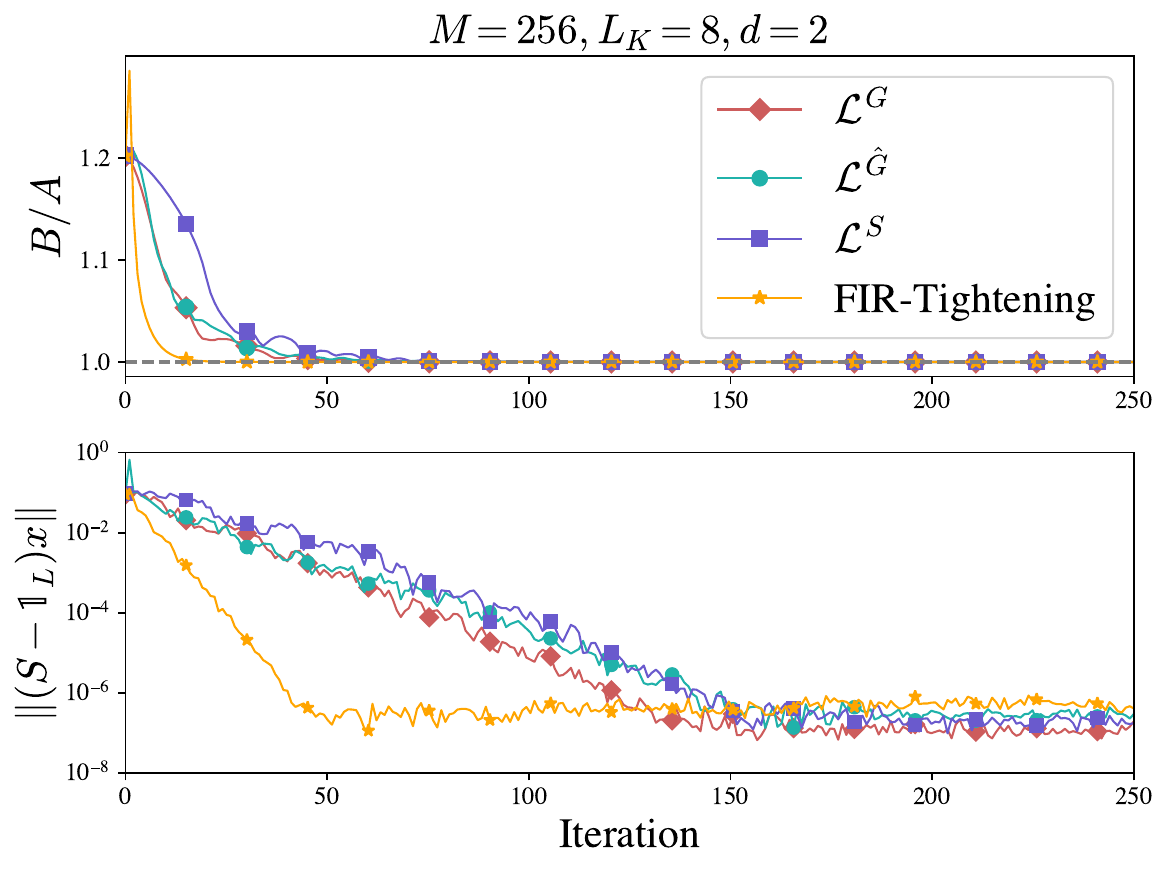}
    \end{subfigure}
    \hfill
    \begin{subfigure}[t]{0.49\textwidth}
        \centering
        \hfill
        \includegraphics[width=\textwidth]{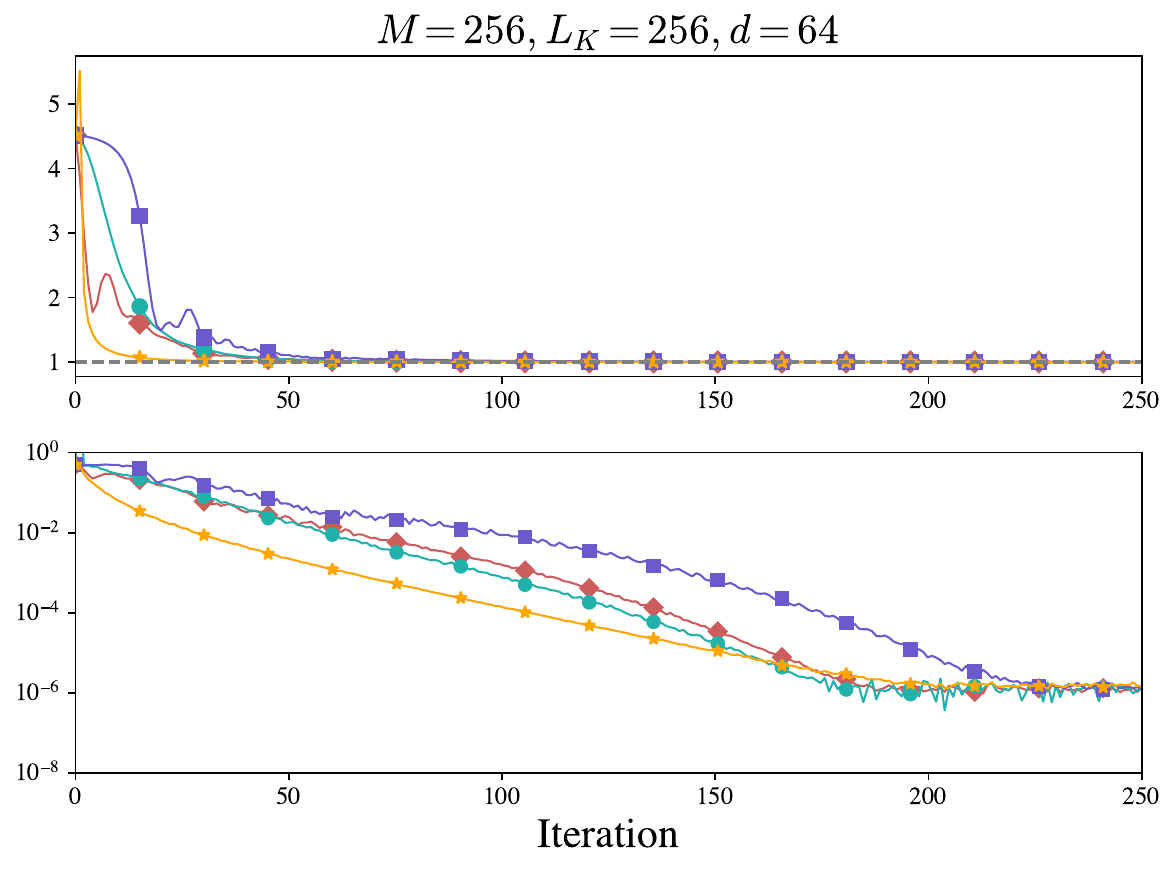}
    \end{subfigure}
    \caption{Progress of optimizing randomly initialized strided convolutional layers to be Parseval stable (250 iterations). We compare the three objectives $\mathcal{L}^G$, $\mathcal{L}^{\hat{G}}$, $\mathcal{L}^S$, and FIR-tightening. Top: Condition number $B/A$. Bottom: Reconstruction error $\Vert (S- I_L) x \Vert$ for random $x$, plotted on a log-scale.
    Left: A common setting for a single-channel layer. Right: The single-channel pendent that corresponds to a multi-channel layer with $C=32$ input channels, $L_K=8$, and $d=2$.
    In general, for any setting where the initialization yields a g-frame, all methods are capable of getting to Parseval stability at machine precision.}
    \label{fig:tightenup}
    \hspace{1cm}
    \includegraphics[width=\linewidth]{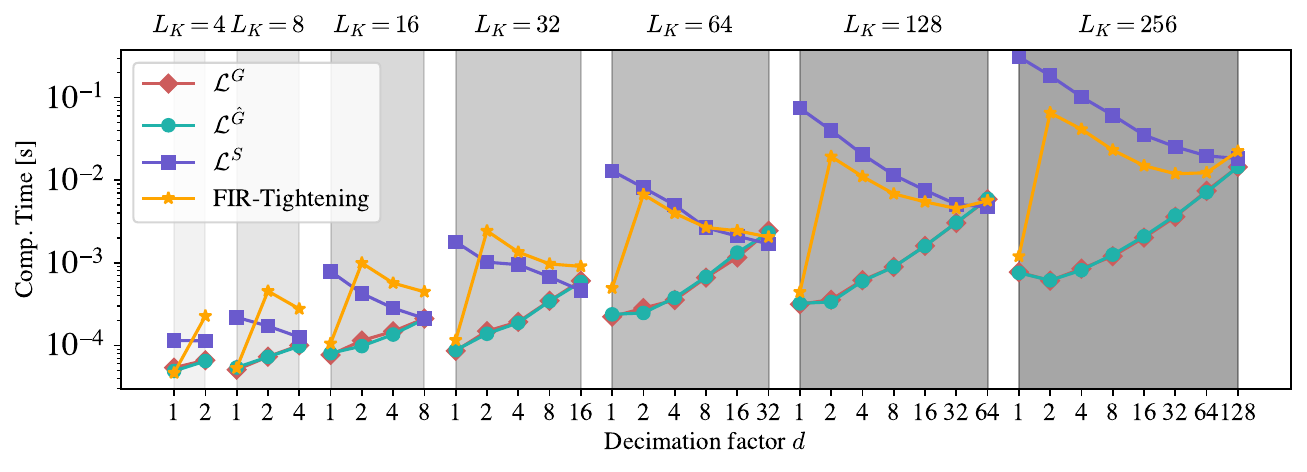}
    \caption{Average computation time for one step (log scale). We compare different kernel sizes $L_K$ and decimation factors $d\leq L_K/2$. We set $M=256$. Except for $d= L_K/2$, the aliasing-based methods $\mathcal{L}^G$ and $\mathcal{L}^{\hat{G}}$ are significantly faster than the baseline $\mathcal{L}^S$ (up to $32$ times) and FIR-tightening.}
    \label{fig:speed}
\end{figure}

\newpage
\begin{definition}[FIR-tightening]
    Let $\{(w_j^{(0)})_{j=1}^M,\downarrow_d\}$ be a filterbank
    that forms a g-frame for $\CC^L$ and $\mathbf{P}_{L_K}:\CC^L\rightarrow \CC^L$ be the projection operator onto the first ${L_K}\leq L$ coordinates. We define the $n$-FIR-tightened filterbank $\{(w_j^{(n)})_{j=1}^M,\downarrow_d\}$ by the filters 
    \begin{equation}\label{eq:firtight}
        w_j^{(n)} = \mathbf{P}_{L_K}\left(\left(S^{(n-1)}\right)^{-\frac{1}{2}}   w_j^{(n-1)}\right),
    \end{equation}
    where $S^{(n-1)}$ is the frame operator for the filterbank $\{(w_j^{(n-1)})_{j=1}^M,\downarrow_d\}$.  
\end{definition}
Clearly, the filters $w_j^{(n-1)}$ have kernels of size $L_K$ at any iteration $n>0$.
However, we can not guarantee exact Parseval stability due to the projection step.
Yet, in many numerical experiments, we observed that FIR-tightening reliably converges to approximately Parseval g-frames in few iterations. It should be noted, however, that this procedure is very invasive due to the projection, and computationally very expansive as it required a matrix inversion in every step.\\

We initialize the kernels as i.i.d.~Gaussian random vectors with zero mean and variance $(L_K M)^{-1}$. We use stochastic gradient decent with learning rates between $10^{-2}$ and $10^{-4}$ and add an adaptive regulation by scaling the objective with $(B/A-1)^{1/10}$ in every iteration.
In Figure \ref{fig:tightenup}, we show the progress of the conditions numbers $B/A$ and the reconstruction errors $\Vert (S- I_L) x \Vert$ for random and normalized $x$ over $250$ iterations.
All methods were capable of
getting to Parseval stability at machine precision ($B/A \approx 1+10^{-7}$ and $\Vert (S- I_L) x \Vert\approx 10^{-7}$) in any considered scenario.

When looking at the computation time\footnote{The computations were done on the CPU of a MacBook Pro (M4, 14-core CPU).} that the evaluations of the regularizers take, the aliasing-based methods outperform in all relevant settings, especially when the stride is small. In Figure \ref{fig:speed} we compare the average speed of computing the different objectives in different settings, and plot the times on a log scale. In very low parameter settings all the methods are approximately equally fast. For any increase in $M$ or $L_K$, the aliasing-based methods take the lead. For the setting $M=256,L_K=64, d=2$ they are approximately $32$ times faster than the baseline.

\section{Aliasing at Random Initialization}\label{sec:random}

Initializing neural network weights with random draws from a probability distribution is a standard practice. In convolutional layers, this means that the kernels are random vectors. The statistical behavior of aliasing at this stage can be used to understand how much a feature representation is affected by aliasing before any learning occurs. This insight can guide the design of initialization schemes to mitigate destructive aliasing and potentially enhance learning dynamics.

\subsection{Expected value and variance}
We assume that the kernels of the filters $w_{j}$ are i.i.d.~random vectors of length $L_K$ and the rest of the $L-L_K$ filter entries are zero. For the Gaussian or uniform i.i.d.~case, the distribution looks like
\begin{equation}\label{eq:rand}
    w_j[n]\sim\begin{cases}
        \mathcal{N}(\mu,\sigma^2) \text{ or }\mathcal{U}_{[a, b]}& \text{if } 0\leq n\leq L_K-1\\
        0& \text{otherwise.}
    \end{cases}
\end{equation}
By the extension with zeros, the $w_j$ are not independent random vectors.
The following theorem shows how the expected value and variance of the aliasing terms of such a filterbank behave.
Note that the signal length $L$ appearing as a factor in all the expression comes from the unitary DFT and would not pop up with another normalization scheme.

\begin{theorem}\label{thm:randfb}
    Let $\{(w_j)_{j=1}^M,\downarrow_d\}$ with kernel size $L_K$ be a filterbank for $\CC^L$, where the $w_j$ are distributed as in~\eqref{eq:rand} with zero mean and variance $\sigma^2$.
    The corresponding aliasing terms $G_n$ are complex random vectors satisfying the following. For any $0\leq k\leq L-1$ the expected value is given by
    \begin{equation}\label{eq:expOG}
        \mathbb{E}\left[ G_n[k] \right] = \frac{\sigma^2M}{Ld}\sum_{\ell=0}^{L_K-1}e^{-2\pi i \ell n /d}.
    \end{equation}
    In particular, if $\tfrac{L_K}{d}\in \NN$, then
\begin{equation}\label{eq:div}
    \mathbb{E}\left[ G_n[k] \right] =
    \begin{cases}
        \frac{\sigma^2ML_K}{Ld}&\text{if }n=0\\
        0&\text{if }1\leq n\leq d-1.
    \end{cases}
\end{equation}
The variance is given by
    \begin{equation}\label{eq:varOG}
        \mathbb{V}\left[ G_n[k] \right] = \frac{\sigma^4M}{L^2d^2}\left( \frac{\sin^2(L_K\pi(\frac{2k}{L}-\frac{n}{d}))}{\sin^2(\pi(\frac{2k}{L}-\frac{n}{d}))} + L_K^2 \right).
    \end{equation}
    Moreover, for any $0\leq k\leq L-1$
    \begin{equation}\label{eq:varuni}
    \frac{\sigma^4ML_K^2}{L^2d^2}\leq \mathbb{V}\left[ G_n[k] \right]\leq \frac{2\sigma^4ML_K^2}{L^2d^2}.
\end{equation}
\end{theorem}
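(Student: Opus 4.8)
The plan is to reduce every claim to first- and fourth-moment computations for the kernel entries. Writing $k'=k-n\tfrac{L}{d}$ and expanding the aliasing term as $G_n[k]=d\inv\sum_{j=1}^M\hat w_j[k]\,\overline{\hat w_j[k']}$ with $\hat w_j[k]=\tfrac{1}{\sqrt L}\sum_{\ell=0}^{L_K-1}w_j[\ell]e^{-2\pi ik\ell/L}$, each coefficient is a fixed linear combination of the i.i.d.\ entries $w_j[\ell]$. For the mean I would use only $\mathbb{E}[w_j[\ell]\overline{w_j[m]}]=\sigma^2\delta_{\ell m}$, which gives $\mu_n:=\mathbb{E}[\hat w_j[k]\overline{\hat w_j[k']}]=\tfrac{\sigma^2}{L}\sum_{\ell=0}^{L_K-1}e^{-2\pi in\ell/d}$, independent of $j$ and $k$; summing over $j$ and dividing by $d$ yields \eqref{eq:expOG}. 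Equation \eqref{eq:div} then follows by evaluating the geometric sum: if $\tfrac{L_K}{d}\in\NN$ it consists of $\tfrac{L_K}{d}$ complete cycles of $d$-th roots of unity, hence equals $L_K$ for $n=0$ and $0$ for $1\le n\le d-1$.

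For the variance I would compute $\mathbb{E}[|G_n[k]|^2]=d^{-2}\sum_{j,j'}\mathbb{E}[\hat w_j[k]\overline{\hat w_j[k']}\,\overline{\hat w_{j'}[k]}\hat w_{j'}[k']]$ and subtract $|\mathbb{E}[G_n[k]]|^2=\tfrac{M^2}{d^2}|\mu_n|^2$. By independence across filters, each of the $M(M-1)$ off-diagonal pairs factorizes into $\mu_n\overline{\mu_n}=|\mu_n|^2$, and these, together with one $|\mu_n|^2$ arising from the diagonal, reassemble exactly the subtracted $|\mathbb{E}[G_n[k]]|^2$; hence only the genuinely fourth-order part of the $M$ diagonal terms $\mathbb{E}[|\hat w_j[k]|^2|\hat w_j[k']|^2]$ survives in $\mathbb{V}[G_n[k]]$.

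Expanding that diagonal term into a quadruple sum and applying the Gaussian Isserlis identity $\mathbb{E}[w_j[\ell]w_j[m]w_j[p]w_j[q]]=\sigma^4(\delta_{\ell m}\delta_{pq}+\delta_{\ell p}\delta_{mq}+\delta_{\ell q}\delta_{mp})$, the three pairings contribute, respectively, the constant $\tfrac{\sigma^4}{L^2}L_K^2$ (from $\ell{=}m,\,p{=}q$), a further $|\mu_n|^2$ (from $\ell{=}q,\,m{=}p$, which merges into the already cancelled mean part), and the cross-pairing $\ell{=}p,\,m{=}q$, whose phase collects $k+k'=2k-n\tfrac{L}{d}$ and produces the squared Dirichlet kernel $\tfrac{\sigma^4}{L^2}\sin^2(L_K\theta)/\sin^2(\theta)$ with $\theta=\pi(\tfrac{2k}{L}-\tfrac{n}{d})$. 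Collecting these gives \eqref{eq:varOG}. The bounds \eqref{eq:varuni} then reduce to $0\le\sin^2(L_K\theta)/\sin^2(\theta)\le L_K^2$: the lower bound is trivial, and for the upper bound the finite geometric sum $\sum_{\ell=0}^{L_K-1}e^{-2i\theta\ell}=e^{-i\theta(L_K-1)}\sin(L_K\theta)/\sin(\theta)$ together with the triangle inequality gives $|\sin(L_K\theta)/\sin(\theta)|\le L_K$, the ratio taking its removable limit $L_K$ at $\sin\theta=0$. Feeding the two extremes of this ratio into the bracket $(\cdots+L_K^2)$ of \eqref{eq:varOG}, which therefore lies in $[L_K^2,2L_K^2]$, yields exactly \eqref{eq:varuni}.

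The main obstacle is the fourth-moment bookkeeping: one must verify that precisely the cross-pairing $\ell{=}p,\,m{=}q$ generates the Dirichlet kernel, while the other two pairings produce the flat $L_K^2$ term and the harmless mean-type $|\mu_n|^2$. Equally important is that the distribution enters \eqref{eq:varOG} only through its fourth moment, and the clean formula relies on $\mathbb{E}[w^4]=3\sigma^4$. For a general law in \eqref{eq:rand} the coincident pairing $\ell{=}m{=}p{=}q$ adds an excess-kurtosis correction of order $\tfrac{M(\mathbb{E}[w^4]-3\sigma^4)L_K}{L^2d^2}$, which is subleading in $L_K$ relative to the $O(L_K^2)$ main terms and vanishes in the Gaussian case, for which \eqref{eq:varOG} and \eqref{eq:varuni} are exact.
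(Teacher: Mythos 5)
Your proof is correct and follows essentially the same route as the paper's: the variance is decomposed across the independent channels into $\mathbb{E}[|\cdot|^2]-|\mathbb{E}[\cdot]|^2$, the Isserlis pairings are assigned exactly as in the paper's proof (one pairing giving the flat $L_K^2$ term, the cross-pairing collecting the phase $\frac{2k}{L}-\frac{n}{d}$ and producing the Fej\'er/squared-Dirichlet term, and one pairing cancelling against the squared mean), followed by the same elementary bound $0\leq \sin^2(L_K\pi t)/\sin^2(\pi t)\leq L_K^2$, with only the cosmetic difference that you obtain the Fej\'er kernel as the squared modulus of a geometric sum rather than via the Fourier transform of the triangular window. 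Your closing caveat is a genuine sharpening rather than a flaw: the paper's proof invokes Isserlis' theorem without restricting to Gaussian kernels even though \eqref{eq:rand} also admits uniform ones, and for those your excess-kurtosis correction of order $\frac{M(\mathbb{E}[w^4]-3\sigma^4)L_K}{L^2d^2}$ (from the fully coincident index configuration) is indeed required for \eqref{eq:varOG} to be exact.
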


\begin{proof}
    The proof for both expressions goes by expanding the Fourier transforms of $\hat{w}_j$ and $\overline{\mathbf{T}_{nL/d}\hat{w}_j}$ and using the i.i.d.~property of the kernels, 
    \begin{equation}\label{eq:cov}
        \text{Cov}\left(w_j[\ell], w_j[\ell']\right)=\begin{cases}
        \sigma^2\delta[\ell-\ell']& \text{for } 0\leq\ell,\ell'\leq T-1\\
        0& \text{for } T\leq\ell,\ell'\leq L-1.
    \end{cases}
    \end{equation}
    The expression for the mean in \eqref{eq:expOG} is given by
    \begin{align}
        \mathbb{E}\left[G_n[k]\right] 
        &= \frac{1}{Ld}\sum_{j=1}^M \sum_{\ell,\ell'=0}^{L_K-1} e^{-2\pi i \frac{\ell k}{L}} e^{2 \pi i \frac{\ell'(k-nL/d)}{L}} \text{Cov}\left(w_j[\ell], w_j[\ell']\right)\\
        &= \frac{\sigma^2M}{Ld} \sum_{\ell=0}^{L_K-1} e^{-2\pi i \ell n/d }.
    \end{align}
    If $\tfrac{L_K}{d}\in \NN$, then the exponential terms are roots of unity, which shows \eqref{eq:div}.
    
    We turn to the variance. Since the terms $\hat{w}_j[k] \overline{\hat{w}_j[k-n\tfrac{L}{d}]}$ are independent across $j$, we can pull out the sum and get that
    \begin{align}
    \mathbb{V}\left[G_n[k]\right]=&\;
    d^{-2}
    \sum_{j=1}^M
    \mathbb{E}\left[\left|\hat{w}_j[k] \overline{\hat{w}_j[k-n\tfrac{L}{d}]}\right|^2\right]
    -\left|\mathbb{E}\left[\hat{w}_j[k] \overline{\hat{w}_j[k-n\tfrac{L}{d}]}\right]\right|^2.
    \end{align}
    For the first term, we find that
    \begin{align}
        &\mathbb{E}\left[\left|\hat{w}_j[k] \overline{\hat{w}_j[k-n\tfrac{L}{d}]}\right|^2\right] \\
        & =\frac{1}{L^2} \sum_{m,m',\ell,\ell'=0}^{L_K-1} e^{-2\pi i \frac{k(m-m')}{L}} e^{-2 \pi i \frac{(k-nL/d)(\ell-\ell') }{L}} \mathbb{E}\left[w_j[m] w_j[\ell]w_{j}[m'] w_{j}[\ell']\right].\label{eq:var1st}
    \end{align}
    Applying Isserlis' Theorem~\cite{isserlis1918formula} and using \eqref{eq:cov} gives
    \begin{align}
        \mathbb{E}\left[w_j[m] w_j[\ell]w_{j}[m'] w_{j}[\ell']\right]= &\;
        \mathbb{E}\left[w_j[m] w_j[\ell]\right]\cdot \mathbb{E}\left[w_{j}[m'] w_{j}[\ell']\right] \\
        & + \mathbb{E}\left[w_j[m] w_{j}[m']\right]\cdot \mathbb{E}\left[w_j[\ell] w_{j}[\ell']\right]\\
        & + \mathbb{E}\left[w_j[m] w_{j}[\ell']\right]\cdot \mathbb{E}\left[w_{j}[m'] w_j[\ell]\right]
    \end{align}
    \vspace{-2em}
    \begin{align}
        =\sigma^4\big(\delta[m-\ell]\cdot \delta[m'-\ell']
        +\delta[m-m']\cdot \delta[\ell-\ell']
        + \delta[m-\ell']\cdot \delta[m'-\ell]\big).
    \end{align}
    This reduces the quadruple sum in \eqref{eq:var1st} to double sums for three different configurations of $m,\ell, m',\ell'$, determined by the covariances. In the first case ($m=\ell, m'=\ell'$), the sums simplify to $\sum_{m,m'=0}^{L_K-1} e^{-2 \pi i (m-m')(\frac{2k}{L}-\frac{n}{d}) }$. In the second case ($m=m', \ell=\ell'$), all exponents cancel, hence, the sums add up to $L_K^2$. The third case ($m=\ell', m'=\ell$), coincides with the second term in the variance,
    \begin{align}
        &\left|\mathbb{E}\left[\hat{w}_j[k] \overline{\hat{w}_j[k-n\tfrac{L}{d}]}\right]\right|^2
        = \frac{\sigma^4}{L^2} \sum_{\ell,\ell'=0}^{L_K-1} e^{2\pi i (\ell-\ell') n/d }.
    \end{align}
    Hence, the two terms cancels.
    In total, this gives us
    \begin{align}
        \mathbb{V}\left[ G_n[k] \right] = \frac{\sigma^4M}{L^2d^2} \left(  \sum_{m,m'=0}^{L_K-1} e^{-2 \pi i (m-m')(\frac{2k}{L}-\frac{n}{d}) } + L_K^2
        \right).
    \end{align}
    We can further simplify the expression by setting $\tau = m-m'$ and recognizing the Fourier transform of a triangular function at $\frac{2k}{L}-\frac{n}{d}$ which has a known closed form expression in terms of $\sin^2$~\cite{mallat08wavelettour}. We get
    \begin{align}
        \mathbb{V}\left[ G_n[k] \right]&= \frac{\sigma^4M}{L^2d^2} \left( \sum_{\tau=-L_K+1}^{L_K-1}(L_K-\vert \tau\vert) e^{-2 \pi i \tau (\frac{2k}{L}-\frac{n}{d}) } + L_K^2\right)\\
        &=\frac{\sigma^4M}{L^2d^2} \left( \frac{\sin^2(L_K\pi(\frac{2k}{L}-\frac{n}{d}))}{\sin^2(\pi(\frac{2k}{L}-\frac{n}{d}))} + L_K^2\right).
    \end{align}
    Finally, the fact that $0\leq \tfrac{\sin^2(L_K\pi t)}{\sin^2(\pi t)}\leq L_K^2$ for all $t\in \mathbb{R}$ shows~\eqref{eq:varuni}.
\end{proof}

From the expression of the variance we see that the aliasing terms peak for $\frac{2k}{L}-\frac{n}{d}\in \ZZ$. Given that $k$ takes values from $0$ to $L-1$ and $0\leq n \leq d-1$, for every $G_n$ this happens exactly twice, namely at $k=\frac{nL}{2d}$ and $k=\frac{nL}{2d}+\frac{L}{2}$. Figure \ref{fig:stats} shows empirical statistics of the $G_n$ for a random $d$-decimated filterbank, together with the theoretical expressions from Theorem \ref{thm:randfb}. We see a behavior that looks like a uniform distribution with mean at one that peaks around the two mentioned values. Note that the peaks comes from the fact that the filters arise by applying rectangular windows of size of the kernels which accumulate energy at their center frequency at zero.\\

In practice, setting the variance for initializing the kernels in dependence of the total number of parameters in the layer is a common way to balance the energy of the layer. According to our findings in Thm.~\ref{thm:randfb}, we should set $\sigma^2 = \frac{Ld}{ML_K}$.

\begin{corollary}
    Let $\tfrac{L_K}{d}\in \NN$ and $\sigma^2 = \frac{Ld}{ML_K}$. Then
    \begin{equation}
        \mathbb{E}[ \hat{S} ]=\mathbb{E}[ S ]= I_L
    \end{equation}
    and
    \begin{equation}\label{eq:varr}
        M\inv\leq \mathbb{V}\left[ G_n \right]\leq 2 M\inv.
    \end{equation}
\end{corollary}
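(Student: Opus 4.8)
The plan is to derive both statements directly from Theorem~\ref{thm:randfb} by substituting the prescribed value $\sigma^2 = \frac{Ld}{ML_K}$, the only non-routine step being the passage from the entrywise statistics of the aliasing terms $G_n$ to the matrices $\hat{S}$ and $S$.

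First I would treat the expectation. Because $\tfrac{L_K}{d}\in\NN$, the simplified dichotomy \eqref{eq:div} applies, so that $\mathbb{E}[G_0[k]] = \frac{\sigma^2 M L_K}{Ld}$ for every $k$ while $\mathbb{E}[G_n[k]] = 0$ for all $1\leq n\leq d-1$. Inserting $\sigma^2 = \frac{Ld}{ML_K}$ collapses the first expression to $\mathbb{E}[G_0[k]] = 1$, independently of $k$. Thus in expectation the diagonal aliasing term is identically one and every higher aliasing term vanishes.

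Next I would lift this to the operator level. By the entry formula for $\hat{S}$ (equivalently, the Walnut representation of Proposition~\ref{prop:walnut}), the diagonal entries of $\hat{S}$ are exactly the values of $G_0$, while every nonzero off-diagonal entry is a value of some $G_n$ with $n\geq 1$, all remaining entries being zero. Taking expectations entrywise and using linearity of expectation, the diagonal of $\mathbb{E}[\hat{S}]$ is identically $1$ and all off-diagonals vanish, i.e.\ $\mathbb{E}[\hat{S}]=I_L$. Since the matrix Fourier transform $\hat{S}=FSF^*$ is a deterministic conjugation by the unitary $F$, expectation commutes with it, and therefore $\mathbb{E}[S]=F^*\,\mathbb{E}[\hat{S}]\,F = F^*F = I_L$ as well. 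This is the one place where an argument beyond substitution is needed, and it is the main (mild) obstacle: one must invoke the band structure of $\hat{S}$ and the fact that $F$ carries no randomness.

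Finally, for the variance I would substitute $\sigma^2 = \frac{Ld}{ML_K}$, equivalently $\sigma^4 = \frac{L^2d^2}{M^2L_K^2}$, into the uniform two-sided bound \eqref{eq:varuni}. The common prefactor $\frac{\sigma^4 M L_K^2}{L^2 d^2}$ reduces to $M\inv$, so that the lower bound becomes $M\inv$ and the upper bound becomes $2M\inv$, which is precisely \eqref{eq:varr} for every $k$ and every $n$. Since \eqref{eq:varuni} already holds uniformly in $k$, no further estimation is required, and the corollary follows.
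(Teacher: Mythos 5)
Your proof is correct and matches the paper's intent: the paper states this corollary without an explicit proof, treating it as an immediate consequence of Theorem~\ref{thm:randfb} by substituting $\sigma^2=\frac{Ld}{ML_K}$, which is exactly your argument. Your added steps — reading off $\mathbb{E}[\hat S]=I_L$ from the band structure of the Walnut representation and transferring it to $S$ via the deterministic unitary conjugation $S=F^*\hat S F$ — are precisely the routine details the paper leaves implicit, so there is no substantive difference in approach.
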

This makes precise that the stability of a random filterbank, distributed with a variance that is inversely proportional to the number of parameters, depends merely on the number of channels. Given that the variance bounds in \eqref{eq:varr} are very coarse by only considering the two peaks, we find that a random filterbank (with appropriate scaling) seems to behave well in reasonable settings. Even for only $M=8$ channels, the variance lies between $0.125$ and $0.25$. For $M=256$, the bounds already become $0.004$ and $0.008$.

\begin{figure}
    \centering
    \includegraphics[width=1\linewidth]{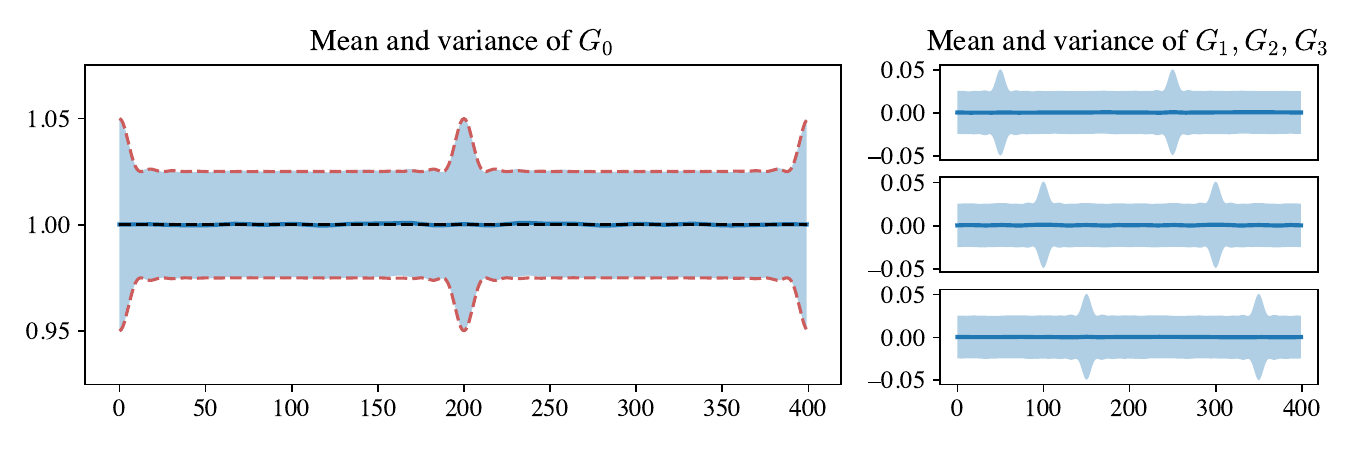}
    \caption{Empirical and theoretical expected values and variances of the aliasing terms of a filterbank with $M=40$ random filters in $\CC^{400}$, kernel size $L_K=16$, and stride $d=4$. The empirical statistics are computed from drawing the random filterbank $100$k times. The dashed lines represent the theoretical expressions from Thm.~\ref{thm:randfb}.}
    \label{fig:stats}
\end{figure}

\section{Generalizations}\label{sec:multi}

In this last section, we describe how more general settings can be converted to the filterbank format $\{(w_j)_{j=1}^M, \downarrow_{d}\}$ that we assumed throughout the paper. With these adaption, any result from the previous sections can be applied directly.

\subsection{Multi-channel layers}

Convolutional layers are commonly used to process multiple input channels simultaneously. Let $\{(w_{1,j})_{j=1}^{M},\dots, (w_{C,j})_{j=1}^{M}, \downarrow_{d}\}$ denote a multi-channel filterbank for $\CC^L$ with $C$ input channels and $M$ output channels, where $w_{i,j}\in \CC^L$ denotes the filter associated with input channel $i$ and output channel $j$. Let $x_i$ denote the input from channel $i$ then the transform (analysis operator) for the multi-channel filterbank $\mathbf{\Theta}:\CC^{CL}\rightarrow \CC^{ML/d}$ is given by
\begin{equation}
    (x_1,\ldots,x_C)\mapsto \left( \sum_{i=1}^C (x_i \ast w_{i,j})\hspace{-0.25em}\downarrow_d\right)_{j=1}^M.
\end{equation}
By interlacing the filters across all channels, we obtain a filterbank with equivalent stability properties.

\begin{lemma}
    The frame bounds of the filterbank $\{(w_{1,j})_{j=1}^{M},\dots, (w_{C,j})_{j=1}^{M}, \downarrow_{d}\}$ in $\CC^L$ are given by the ones of the filterbank
    \begin{equation}
        \{(\mathrm{\mathbf{w}}_j)_{j=1}^M,\downarrow_{Cd}\}\text{ in }\CC^{CL},
    \end{equation}
    with filters $\mathrm{\mathbf{w}}_j\in \CC^{CL}$ given by
    \begin{equation}
        \mathrm{\mathbf{w}}_j[nC+i-1]=w_{i,j}[n].
    \end{equation}
    The kernel size is $CL_K$.
\end{lemma}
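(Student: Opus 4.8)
The plan is to reduce the multi-channel system to the single-channel one by a norm-preserving \emph{interlacing} of the input, and then to argue that the two analysis operators coincide up to unitaries. Since the frame bounds are precisely the extremal values of $\|\Theta x\|^2/\|x\|^2$ (equivalently, the extremal eigenvalues of the frame operator), unitary invariance forces the bounds of the two systems to agree.

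First I would make the interlacing explicit. Define $U\colon(\CC^L)^C\to\CC^{CL}$ by $\big(U(x_1,\dots,x_C)\big)[nC+i-1]=x_i[n]$ for $0\le n\le L-1$ and $1\le i\le C$. This is a permutation of coordinates, hence unitary, and in particular $\|(x_1,\dots,x_C)\|^2=\sum_{i=1}^C\|x_i\|^2=\|U(x_1,\dots,x_C)\|^2$. Writing $\mathbf{x}=U(x_1,\dots,x_C)$, the crux is to show that interlacing intertwines the two operators, i.e. that (up to the filter reversal discussed below) for every output channel $j$
\[
(\mathbf{x}\ast\mathbf{w}_j)\downarrow_{Cd}=\sum_{i=1}^C (x_i\ast w_{i,j})\downarrow_d ,
\]
so that $C$ separate stride-$d$ convolutions collapse into a single stride-$Cd$ convolution. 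I would verify this by substituting the interlaced index $q=nC+(i-1)$ into the circular convolution sum and evaluating at the decimated positions $Cdm$: the residue $q\bmod C$ plays the role of the channel label and the quotient $\lfloor q/C\rfloor$ the time index, and a hop of $Cd$ in the long signal is exactly a hop of $d$ within each length-$C$ block. The summation over channels appears automatically because the $C$ consecutive coordinates of one block carry the $C$ channels, which the convolution pairs with the correspondingly interlaced filter taps of $\mathbf{w}_j$.

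Granting the identity, the conclusion is immediate: $\|\Theta\mathbf{x}\|^2=\sum_{j=1}^M\big\|(\mathbf{x}\ast\mathbf{w}_j)\downarrow_{Cd}\big\|^2=\sum_{j=1}^M\big\|\sum_{i}(x_i\ast w_{i,j})\downarrow_d\big\|^2=\|\mathbf{\Theta}(x_1,\dots,x_C)\|^2$, while $U$ is a norm-preserving bijection. Hence the frame inequality holds for the interlaced single-channel filterbank with constants $A,B$ if and only if it holds for the multi-channel filterbank with the same $A,B$; equivalently, the two frame operators are unitarily equivalent via $U$ and thus share their spectra, so the optimal bounds coincide. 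The kernel-size claim is then pure bookkeeping: since each $w_{i,j}$ is supported on $\{0,\dots,L_K-1\}$, the support of $\mathbf{w}_j$ is $\{\,nC+i-1:0\le n\le L_K-1,\ 1\le i\le C\,\}=\{0,\dots,CL_K-1\}$, so the interlaced kernel has length $CL_K$.

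The step I expect to be delicate is precisely the index identity, and within it the filter reversal inherent in circular convolution. With the convolution convention used for the filterbank, the interlaced output reproduces the channel sum only after a per-channel circular shift of the block coordinates; the computation is cleanest for cross-correlation, where $x\star w=x\ast\mathbf{R}w$ and the displayed identity is exact with no shift. Such a shift is itself unitary and can be absorbed into $U$ without changing any $\|\cdot\|^2$, so it leaves the frame bounds untouched — the only care required is to route it through $U$ correctly, or equivalently to interlace the time-reversed filters $\mathbf{R}w_{i,j}$.
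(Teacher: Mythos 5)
Your proof takes exactly the same route as the paper's: interlace the input via the unitary coordinate permutation, establish the intertwining identity $\sum_{i=1}^C (x_i \ast w_{i,j})\downarrow_d = (\mathbf{x}\ast \mathbf{w}_j)\downarrow_{Cd}$, and conclude that the analysis operators have equal norms on corresponding inputs, so the frame bounds coincide — this is precisely the paper's three-line proof, plus the kernel-size bookkeeping. Your flagged subtlety about the filter reversal in circular convolution is in fact warranted: with the convolution convention $(x\ast w)[n]=\sum_\ell x[\ell]w[n-\ell]$ and the stated interlacing, the displayed identity holds only after the per-channel circular shift you describe (a small example with $C=2$ confirms the mismatch), and absorbing that shift into a unitary, as you propose, is the correct repair of what the paper dismisses as a ``straightforward calculation.''
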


\begin{proof}
    Let $\mathrm{\mathbf{x}} \in \CC^{CL}$ be given by $\mathrm{\mathbf{x}}[nC+i-1]=x_{i}[n]$ 
    then a straightforward calculation shows that for every $j$,
    \begin{equation}
    \sum_{i=1}^C (x_i \ast w_{i,j})\hspace{-0.25em}\downarrow_d\; = (\mathrm{\mathbf{x}}\ast \mathrm{\mathbf{w}}_j)\hspace{-0.25em}\downarrow_{Cd}.
    \end{equation}
    The claim follows since interlacing is a unitary operation.
\end{proof}

The corresponding g-frame has elements $\mathrm{\mathbf{T}}_j:\CC^{CL}\rightarrow \CC^{L/d}$ given by $x\mapsto (x\ast \mathrm{\mathbf{w}}_j)\hspace{-0.25em}\downarrow_{Cd}$
and the frame operator is given by $\mathrm{\mathbf{S}} =\mathbf{\Theta}^*\mathbf{\Theta}= \sum_{j=1}^M \mathrm{\mathbf{T}}_j^* \mathrm{\mathbf{T}}_j$. Tightness means $\mathrm{\mathbf{S}} = A\cdot  I_{CL}$.\\

In the multichannel setting, there are situations where we cannot (and also do not want to) ensure the g-frame property on $\CC^{CL}$ due to specific architecture designs or dimensionality constraints, i.e., $C > M/d$. In this case, the largest and smallest non-zero eigenvalues of $\mathrm{\mathbf{S}}$ provide the stability bounds on the range of $\mathbf{\Theta}$. We can express this situation conveniently through the frame bounds of a filterbank with different filters.

\begin{lemma}
    Let $\{(w_{i,j})_{i=1,j=1}^{C,M},\downarrow_{d}\}$ be a multi-channel filterbank in $\CC^L$ with $C > M/d$. Then the largest and smallest non-zero eigenvalues of $\mathrm{\mathbf{S}}$ are equal to the frame bounds of the filterbank
    \begin{equation}
        \{(\mathbf{v}_j)_{j=1}^d, \downarrow_M\}\text{ in }\CC^{ML/d},
    \end{equation}
    where the filters $\mathbf{v}_j\in \CC^{ML/d}$ are given by
    \begin{equation}
        \mathbf{v}_j[nM+k-1]=\overline{\mathbf{w}_k[nCd+j-1]}.
    \end{equation}
\end{lemma}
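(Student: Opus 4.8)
The plan is to pass from the frame operator $\mathbf{S}=\mathbf{\Theta}^*\mathbf{\Theta}$ to the coefficient-side Gram operator $\mathbf{\Theta}\mathbf{\Theta}^*$ and to recognize the latter as the frame operator of the dual filterbank $\{(\mathbf{v}_j),\downarrow_M\}$. By the preceding interlacing lemma and the unitarity of interlacing, I may replace the multi-channel system by the single-channel filterbank $\{(\mathbf{w}_j)_{j=1}^M,\downarrow_{Cd}\}$ on $\CC^{CL}$ without changing any eigenvalue of $\mathbf{S}$; write $\mathbf{\Theta}\colon\CC^{CL}\to\CC^{ML/d}$ for its analysis operator. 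The first ingredient is the elementary fact that for any matrix $\mathbf{\Theta}$ the operators $\mathbf{\Theta}^*\mathbf{\Theta}$ and $\mathbf{\Theta}\mathbf{\Theta}^*$ have the same nonzero eigenvalues with multiplicities. Hence the largest and smallest \emph{nonzero} eigenvalues of $\mathbf{S}$ (which, since $C>M/d$, is singular) equal those of $\mathbf{\Theta}\mathbf{\Theta}^*$, an operator on the coefficient space $\CC^{ML/d}$.

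Next I would compute $\mathbf{\Theta}\mathbf{\Theta}^*$ blockwise. Identifying the coefficient space with $\CC^{ML/d}$ through the interlacing $u[nM+k-1]=y_k[n]$, a direct calculation from $\mathbf{T}_j x=(x\ast\mathbf{w}_j)\downarrow_{Cd}$ and $\mathbf{T}_j^* y=(y\uparrow_{Cd})\ast\overline{\mathbf{R}\mathbf{w}_j}$ gives
\begin{equation}
(\mathbf{\Theta}\mathbf{\Theta}^* y)_{j'}[m]=\sum_{j}\sum_{p} c_{j',j}[p-m]\,y_j[p],\qquad c_{j',j}[r]=\sum_{s}\mathbf{w}_{j'}[s]\,\overline{\mathbf{w}_j[s+Cd\,r]}.
\end{equation}
Thus $\mathbf{\Theta}\mathbf{\Theta}^*$ is a MIMO circular convolution whose kernels are the cross-correlations of the interlaced filters sampled on the lattice $Cd\,\ZZ$. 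It then remains to show that this operator is exactly the frame operator $\Psi^*\Psi$ of $\{(\mathbf{v}_j),\downarrow_M\}$, where $\Psi$ denotes its analysis operator on $\CC^{ML/d}$, so that the $M$-lattice cross-correlations of the $\mathbf{v}_j$ reproduce the kernels $c_{j',j}$.

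The cleanest way to verify this identification is via the polyphase symbol. Writing $P(\omega)\in\CC^{M\times Cd}$ for the polyphase matrix of $\{(\mathbf{w}_j)_{j=1}^M,\downarrow_{Cd}\}$ (rows indexed by the $M$ filters, columns by the phases relative to $\downarrow_{Cd}$), one has $\mathbf{S}\leftrightarrow P^*P$ and $\mathbf{\Theta}\mathbf{\Theta}^*\leftrightarrow PP^*$ pointwise in $\omega$. The reindexing $\mathbf{v}_j[nM+k-1]=\overline{\mathbf{w}_k[nCd+j-1]}$ is precisely the operation that turns the $j$-th phase of $\mathbf{w}_k$ into the $k$-th phase of $\mathbf{v}_j$; consequently the polyphase matrix of the dual filterbank is a frequency-reflected conjugate of $P^*$, and its frame-operator symbol is therefore $PP^*$. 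Since $P^*P$ and $PP^*$ share the same nonzero singular values at every frequency, their extreme nonzero eigenvalues over all frequencies coincide, and these are by definition the frame bounds of the dual filterbank. Chaining the three steps yields the claim.

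I expect the main obstacle to be the bookkeeping in the last two steps: tracking how the $Cd$-lattice cross-correlations $c_{j',j}$ of the $\mathbf{w}_j$ reorganize into the $M$-lattice cross-correlations of the $\mathbf{v}_j$ under the double reindexing, and in particular confirming that \emph{every} phase component of the $\mathbf{w}_k$ is collected into some dual filter, so that the two nonzero spectra agree exactly rather than only on a sub-collection. Once the polyphase dictionary is set up so that the dual symbol is genuinely $P^*$, the identity $\mathbf{\Theta}\mathbf{\Theta}^*=\Psi^*\Psi$ becomes a formal consequence, and the frame-bound statement follows from the $\mathbf{\Theta}^*\mathbf{\Theta}$–$\mathbf{\Theta}\mathbf{\Theta}^*$ spectral identity.
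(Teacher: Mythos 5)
Your proposal is correct and takes essentially the same route as the paper: the paper's proof likewise passes from $\mathbf{S}=\mathbf{\Theta}^*\mathbf{\Theta}$ to the Gram matrix $\mathbf{G}=\mathbf{\Theta}\mathbf{\Theta}^*$ of the interlaced filterbank $\{(\mathbf{w}_j)_{j=1}^M,\downarrow_{Cd}\}$, identifies $\mathbf{G}$ with the frame operator of the $\mathbf{v}$-filterbank through exactly the row-and-column rearrangement that your polyphase identity $Q(\omega)=P^{*}(-\omega)$ makes explicit, and concludes from the fact that Gram matrix and frame operator share all non-zero eigenvalues. The bookkeeping obstacle you flag at the end is a real and substantive point, and your polyphase dictionary settles it: for $Q(\omega)$ to contain \emph{all} rows of $P^{*}(-\omega)$ (equivalently, for every phase of every $\mathbf{w}_k$ to be collected into some dual filter, so that $Q^{*}Q$ equals $PP^{*}$ rather than a compression of it), the index $j$ in $\mathbf{v}_j[nM+k-1]=\overline{\mathbf{w}_k[nCd+j-1]}$ must sweep all $Cd$ residues modulo $Cd$, so the dual system consists of $Cd$ filters rather than the $d$ printed in the statement.
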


\begin{proof}
    Through the described row and column rearrangements one obtains that the frame operator for the filterbank $\{(\mathbf{v}_j)_{j=1}^d, \downarrow_M\}$ is unitarily equivalent to the Gram matrix $\mathrm{\mathbf{G}}= \mathbf{\Theta}\mathbf{\Theta}^*$ for the filterbank $\{(\mathrm{\mathbf{w}}_j)_{j=1}^M,\downarrow_{Cd}\}$.
    Since Gram matrix and frame operator share all non-zero eigenvalues, the claim follows.
\end{proof}

Tightness in the sense of $\mathrm{\mathbf{\tilde{S}}} = A\cdot  I_{ML/d}$ now means that all non-zero eigenvalues of $\mathrm{\mathbf{S}}$ are equal to $A$. We can interpret this as the g-frame for the original filterbank being tight on the subspace $\CC^{ML/d}$. For $A=1$, this is equivalent to the row vectors of the matrix representation of $\mathbf{\Theta}$ being orthonormal~\cite{massart2022rectorth}.

\subsection{Non-uniform stride}

By default, all convolutions in a strided convolutional layer use the same (uniform) stride across the channels. Yet, if a layer architectures requires a non-uniform stride configuration (as e.g., in~\cite{lostanlen2023murenn}),
we can instead consider an equivalent uniform filterbank that contains shifted copies of the filters.
\begin{lemma}
    Let $\{(w_j)_{j=1}^M, \downarrow_{d_j}\}$ be a non-uniform filterbank in $\CC^L$ with channel-specific strides $d_1,\dots,d_M\geq1$. Then the frame bounds are given by the ones of the filterbank
    $$\{((\mathbf{T}_{nd_j}w_j)_{n=0}^{D_j})_{j=1}^{M}, \downarrow_{\mathbf{d}}\}\text{ in }\CC^L,$$
    where $\mathbf{d}=\operatorname{lcm}(d_1,\dots,d_M)$ and $D_j=\mathbf{d}/d_j$.
\end{lemma}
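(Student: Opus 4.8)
The plan is to show that the two filterbanks have \emph{identical frame operators}, from which equality of the frame bounds (the extreme eigenvalues of the frame operator) follows at once. Writing $T_j x = (x\ast w_j)\hspace{-0.25em}\downarrow_{d_j}$ for the analysis components of the non-uniform filterbank and $T_{j,n}x = (x\ast \mathbf{T}_{nd_j}w_j)\hspace{-0.25em}\downarrow_{\mathbf{d}}$ for those of the proposed uniform one, it suffices to prove
\begin{equation}
\sum_{j=1}^M \Vert T_j x\Vert^2 = \sum_{j=1}^M\sum_{n=0}^{D_j-1}\Vert T_{j,n}x\Vert^2\qquad\text{for all }x\in\CC^L.
\end{equation}
The two sides are the quadratic forms $\langle S x,x\rangle$ and $\langle \mathbf{S}x,x\rangle$ of the respective (self-adjoint, positive semidefinite) frame operators, and equality of the quadratic forms on all of $\CC^L$ forces $S=\mathbf{S}$, hence identical spectra and frame bounds. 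As in the uniform case I would assume $\mathbf{d}\mid L$, extending the signal length to the next multiple of $\mathbf{d}=\operatorname{lcm}(d_1,\dots,d_M)$ if necessary; note that there are exactly $D_j=\mathbf{d}/d_j$ shifted copies per channel, so the correct index range is $n=0,\dots,D_j-1$.

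Next I would fix a channel $j$ and reduce to a single-channel identity. Using the translation property $(x\ast \mathbf{T}_{a}w_j)[k]=(x\ast w_j)[k-a]$ together with the definition of decimation, a direct computation gives
\begin{equation}
(T_{j,n}x)[m] = (x\ast w_j)[m\mathbf{d}-nd_j] = (x\ast w_j)\big[(mD_j-n)d_j\big],
\end{equation}
for $m=0,\dots,Q-1$ and $n=0,\dots,D_j-1$, where $Q=L/\mathbf{d}$ and $\mathbf{d}=D_jd_j$. Thus every sub-channel samples the sequence $(x\ast w_j)$ on the coarse grid $d_j\ZZ$, exactly as $T_j$ does; the only question is which grid points are hit and with what multiplicity.

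The crux of the argument, and the step I expect to be the main obstacle, is a counting statement: as $(m,n)$ ranges over $\{0,\dots,Q-1\}\times\{0,\dots,D_j-1\}$, the index $(mD_j-n)\bmod (L/d_j)$ runs bijectively through $\{0,\dots,L/d_j-1\}$. This is precisely the polyphase partition of the fine index set (of size $QD_j=L/d_j$) according to residues modulo $D_j$, and I would prove it by Euclidean division: if $m_1D_j-n_1\equiv m_2D_j-n_2\pmod{QD_j}$, then reducing modulo $D_j$ first forces $n_1=n_2$, after which $m_1=m_2$ follows; injectivity on a set of matching cardinality gives the bijection. Granting this, and using that $(x\ast w_j)$ is $L$-periodic so that its sampling on $d_j\ZZ$ is $(L/d_j)$-periodic, I obtain $\sum_{n=0}^{D_j-1}\Vert T_{j,n}x\Vert^2 = \sum_{k=0}^{L/d_j-1}|(x\ast w_j)[kd_j]|^2 = \Vert T_j x\Vert^2$. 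Summing over $j$ yields the displayed identity, hence $S=\mathbf{S}$, and completes the proof.
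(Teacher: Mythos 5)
Your proof is correct, but it is genuinely more self-contained than what the paper offers: the paper's entire proof is a one-line deferral to the $\ell^2(\ZZ)$ result in the cited reference, whereas you give a complete finite-dimensional argument. Your route also proves something slightly stronger than the stated lemma: the identity $\sum_{n=0}^{D_j-1}\Vert T_{j,n}x\Vert^2=\Vert T_jx\Vert^2$ for all $x$ gives, by polarization, equality of the frame operators $S=\mathbf{S}$, not merely equality of the optimal bounds — which in turn legitimizes using the uniformized filterbank in every operator-level statement of the paper (e.g., feeding it into the Walnut representation of Prop.~3.1 to get aliasing terms for non-uniform layers), something the bare equality of bounds would not immediately justify. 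The individual steps all check out: the translation identity $(x\ast\mathbf{T}_{nd_j}w_j)[k]=(x\ast w_j)[k-nd_j]$ is valid for circular convolution, and your counting bijection $(m,n)\mapsto (mD_j-n)\bmod QD_j$ is exactly the polyphase partition, with the injectivity argument (reduce mod $D_j$ to pin down $n$, then cancel $D_j$ mod $QD_j$ to pin down $m$) sound; the $L$-periodicity of $x\ast w_j$ correctly absorbs the negative indices $mD_j-n$ for small $m$. Two small remarks: you correctly flag that the paper's index range $n=0,\dots,D_j$ is a typo for $n=0,\dots,D_j-1$ (otherwise one shifted copy is duplicated and the bounds inflate); and your standing assumption $\mathbf{d}\mid L$ is in fact automatic, since the paper's convention $d_j\mid L$ for each channel already forces $\operatorname{lcm}(d_1,\dots,d_M)\mid L$, so no extension of the signal length is needed.
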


\begin{proof}
    The statement follows from the $\ell^2(\mathbb{Z})$ case for which a proof can be found in~\cite{balazs2017framespsycho}.
\end{proof}

\subsection{Dilated convolution}
For $a\geq 1$, dilated convolution of $x$ and $w$ in $\CC^L$ is defined by
\begin{equation}
    (x \ast_a w)[n] = \sum_{\ell=0}^{L-1}x [a\ell]\overline{w[n-\ell]}.
\end{equation}
This is equivalent to dilate the kernel of the filter by $a$ directly, which can be realized by upsampling the filter and cropping the result back to length $L$. We denote the cropping by $w_{:L}$.
\begin{lemma}
    Let $\{(w_j)_{j=1}^M,\downarrow_d, \wedge_a\}$ denote a filterbank in $\CC^L$ that applies dilated convolution with $a\leq L/L_K$. The frame bounds are given by the ones of the filterbank
    \begin{equation}
        \{((w_j\hspace{-0.25em}\uparrow_a)_{:L})_{j=1}^{M}, \downarrow_{d}\}\text{ in }\CC^L,
    \end{equation}
    with kernel sizes $a L_K$.
\end{lemma}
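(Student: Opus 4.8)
The plan is to reduce everything to a single operator identity, since the frame bounds of any filterbank depend only on the analysis operator $x\mapsto(T_jx)_{j=1}^M$ through the quantity $\sum_{j=1}^M\|T_jx\|^2$. Concretely, if the $j$-th channel of the dilated filterbank and the $j$-th channel of the proposed undilated filterbank $\{((w_j\uparrow_a)_{:L})_{j=1}^M,\downarrow_d\}$ produce exactly the same coefficient sequence for every input $x$, then the two analysis operators are literally equal, their frame operators $S$ coincide, and hence every valid frame-bound pair $A,B$ in~\eqref{eq:gframe} — in particular the optimal bounds given by the extreme eigenvalues of $S$ — is shared. So it suffices to prove, for each $j$, the pointwise identity $(x\ast_a w_j)\downarrow_d=(x\ast(w_j\uparrow_a)_{:L})\downarrow_d$.

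First I would write $v_j=(w_j\uparrow_a)_{:L}$ and invoke the definition of upsampling: $v_j$ is supported on the multiples of $a$ with $v_j[a\ell]=w_j[\ell]$. Substituting $v_j$ into the circular convolution $x\ast v_j$ and keeping only the nonzero taps forces the shift index to be a multiple of $a$; reindexing the surviving terms collapses the sum to precisely the dilated-convolution sum defining $x\ast_a w_j$ (matching the conjugation convention used there). Composing both sides with $\downarrow_d$ then yields the desired per-channel identity. I emphasize that this is an equality of \emph{operators}, not merely of norms, so no eigenvalue comparison is required at this step.

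The one place where the hypothesis $a\leq L/L_K$ is essential — and the only genuine obstacle — is the claim that the cropping $(\cdot)_{:L}$ discards nothing and that the effective kernel size equals $aL_K$. The kernel of $w_j$ is supported on $[0,L_K-1]$, so after upsampling by $a$ its support lies in $\{0,a,\ldots,a(L_K-1)\}$, whose largest index is $a(L_K-1)=aL_K-a$. Under $a\leq L/L_K$ we have $aL_K\leq L$, hence $a(L_K-1)<L$, so the whole upsampled kernel fits inside $[0,L-1]$: the crop removes no nonzero entry and no circular wrap-around corrupts the filter. The spread-out support occupies a block of length $a(L_K-1)+1\leq aL_K$, which is exactly the kernel size claimed.

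Putting these together, the dilated filterbank and $\{((w_j\uparrow_a)_{:L})_{j=1}^M,\downarrow_d\}$ have identical analysis operators and therefore identical frame operators, so their frame bounds agree. As a payoff, all earlier results — the Walnut representation (Prop.~\ref{prop:walnut}), the estimates and tightness characterizations of Thm.~\ref{thm:main} and Cor.~\ref{cor:tight1}, and the random-initialization statistics of Thm.~\ref{thm:randfb} — transfer verbatim to the dilated layer by applying them to the undilated filterbank with kernel size $aL_K$. I would also note, in parallel with the non-uniform-stride lemma, that the same identity holds in $\ell^2(\ZZ)$, so the statement is length-independent.
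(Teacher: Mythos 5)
Your proposal is correct and takes essentially the same route as the paper's proof: the paper likewise reduces everything to the per-channel identity $(x \ast_a w_j)\hspace{-0.25em}\downarrow_d = (x \ast (w_j\hspace{-0.25em}\uparrow_a)_{:L})\hspace{-0.25em}\downarrow_d$ and then observes that $a\leq L/L_K$ guarantees only zeros are cropped, so the frame bounds are unaffected. Your extra details --- the explicit support computation $a(L_K-1)=aL_K-a<L$ and the remark that equality of analysis operators (hence of frame operators) transfers the optimal bounds --- simply spell out what the paper compresses into ``it is easy to see.''
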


\begin{proof}
    It is easy to see that
    \begin{equation}\label{eq:dil}
        (x \ast_a w)\hspace{-0.25em}\downarrow_d[n] = \sum_{\ell=0}^{L-1}x [a\ell]\overline{w[dn-\ell]} = (x \ast (w\hspace{-0.25em}\uparrow_a)_{:L})\hspace{-0.25em}\downarrow_d[n].
    \end{equation}
    Assuming $a\leq L/L_K$ guarantees that we only crop zeros. Hence, the frame bounds are not affected.
\end{proof}

\section{Discussion and Outlook}\label{sec:conclusion}

This works shows that generalized frames offer a natural formalism to study aliasing in strided convolutional layers and link it to their invertibility and stability.
We can elegantly formulate perfect reconstruction conditions, which only depend on the kernel size and the stride, and get tightness characterization with an extension to the painless case via sub-band restriction. With this we can design computationally efficient optimization objectives that are nicely interpretable by means of aliasing suppression. Using the aliasing-based objectives as regularizers to promote Parseval stability in a larger-scale application did not fit into the scope of this work and remains to be tested. In the same context, it is interesting to ask how stability is affected by a learning update in general, and how regularization can intrinsically bias the updates towards a local minimum which yields at least a comparable performance but with stability guarantees.
By a standard frame perturbation result (see e.g.~\cite{waldron2018finitetight}), the worst-case scenario is determined by the upper frame bound of the update filterbank with filters $u_j$ given by $u_j[n]=\Phi\left((w_j)_{j=1}^M\right)[j,n]$, where $\Phi$ represents the update rule (e.g., the gradients).
\begin{lemma}\label{lem:gd}
    Let $\{(w_j)_{j=1}^M,\downarrow_d\}$ be a filterbank with frame bounds $A,B$ and $R$ the optimal upper frame bound for $\{(u_j)_{j=1}^M,\downarrow_d\}$. If $\gamma \cdot R<A$ then $\{(w_j-\gamma\cdot u_j )_{j=1}^M,\downarrow_d\}$ is a g-frame with frame bounds $(\sqrt{A}-\gamma\sqrt{R})^2,(\sqrt{B}+\gamma\sqrt{R})^2$.
\end{lemma}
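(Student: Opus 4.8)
The plan is to recast everything in terms of analysis operators and reduce the statement to the standard finite-dimensional frame-perturbation argument, exploiting that strided convolution is linear in the filter. Write $\Theta$ for the analysis operator of $\{(w_j)_{j=1}^M,\downarrow_d\}$ and $\Theta_u$ for that of $\{(u_j)_{j=1}^M,\downarrow_d\}$. Since $(x\ast(w_j-\gamma u_j))\hspace{-0.25em}\downarrow_d = (x\ast w_j)\hspace{-0.25em}\downarrow_d-\gamma\,(x\ast u_j)\hspace{-0.25em}\downarrow_d$ holds coordinatewise for every $j$, the analysis operator of the update filterbank $\{(w_j-\gamma u_j)_{j=1}^M,\downarrow_d\}$ is exactly $\Theta-\gamma\Theta_u$. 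The frame inequalities \eqref{eq:gframe} translate into the operator-norm statements $\sqrt A\,\Vert x\Vert\le\Vert\Theta x\Vert\le\sqrt B\,\Vert x\Vert$, while the optimal upper frame bound $R$ of the $u_j$ system means $\Vert\Theta_u x\Vert\le\sqrt R\,\Vert x\Vert$ for all $x\in\CC^L$.

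First I would establish the upper bound. By the triangle inequality, $\Vert(\Theta-\gamma\Theta_u)x\Vert\le\Vert\Theta x\Vert+\gamma\Vert\Theta_u x\Vert\le(\sqrt B+\gamma\sqrt R)\Vert x\Vert$, and squaring gives the claimed upper frame bound $(\sqrt B+\gamma\sqrt R)^2$. Next I would treat the lower bound via the reverse triangle inequality, $\Vert(\Theta-\gamma\Theta_u)x\Vert\ge\Vert\Theta x\Vert-\gamma\Vert\Theta_u x\Vert\ge(\sqrt A-\gamma\sqrt R)\Vert x\Vert$, whose square yields $(\sqrt A-\gamma\sqrt R)^2$. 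Together these two estimates sandwich $\Vert(\Theta-\gamma\Theta_u)x\Vert$ and hence certify the g-frame property with the stated bounds.

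The one point that needs care is showing the lower estimate is genuinely a positive frame bound, i.e. that $\sqrt A-\gamma\sqrt R>0$; this is exactly where the smallness hypothesis on $\gamma$ enters, since positivity is equivalent to $\gamma\sqrt R<\sqrt A$, i.e. $\gamma^2 R<A$, which is implied by the stated condition $\gamma\cdot R<A$ in the relevant regime $\gamma\le1$ (as then $\gamma^2 R\le\gamma R<A$). I do not expect any real obstacle: the entire argument is the operator-norm form of the classical perturbation theorem (cf.~\cite{waldron2018finitetight}), and the only things to verify carefully are the linearity identity for the perturbed analysis operator and the bookkeeping of square roots relating frame bounds to operator norms.
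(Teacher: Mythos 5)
Your argument is correct and coincides with how the paper itself handles this lemma: the paper gives no inline proof but invokes exactly this standard perturbation theorem (citing \cite{waldron2018finitetight}), whose proof is precisely the triangle/reverse-triangle estimate on the analysis operator $\Theta-\gamma\Theta_u$ that you wrote out, together with the linearity of strided convolution in the filter. Your caveat about the hypothesis is also well spotted and is a genuine (minor) imprecision in the paper's statement: the squaring step in the lower bound needs $\gamma\sqrt{R}<\sqrt{A}$, i.e.\ $\gamma^2 R<A$, so the printed condition $\gamma\cdot R<A$ suffices only in the regime $\gamma\le 1$ (the learning-rate setting the paper has in mind), whereas for $\gamma>1$ one can satisfy $\gamma R<A$ while $(\sqrt{A}-\gamma\sqrt{R})^2$ exceeds the optimal lower frame bound of the perturbed system — e.g.\ with $\hat w\equiv 1$, $\hat u$ supported on part of the spectrum with height $s=\sqrt{R}$ small and $\gamma$ large so that $\gamma s>2$ but $\gamma s^2<1$ — so strictly the hypothesis should read $\gamma^2 R<A$.
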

While this means that after each weight update, the frame bounds might worsen by $\gamma \sqrt{R}$, an interesting question for future research is if and how regularization can avoid these worst-case scenarios.\\

In the context of random initialization, this work provides the fundamental statistical properties of the aliasing terms, but leaves some open questions on how to apply them. First, how can we obtain the statistics for the frame bounds from the statistics of the aliasing terms? While it seems to be a natural application, we can not use norm properties of random vectors without knowing their distribution. Such an analysis has to be done in a more comprehensive follow-up work that focuses on randomness. Second, we observed that the peaky behavior of the variance around two entries comes from the accumulation of energy at the center frequency of the rectangular windows that we apply. An approach to better distribute the peaks in the variance could be to apply smooth window functions with different frequencies.

\section*{Acknowledgment}
D. Haider is a recipient of a DOC Fellowship (A 26355) of the Austrian Academy of Sciences at the Acoustics Research Institute. The work of V. Lostanlen was supported by the ANR project MuReNN (ANR-23-CE23-0007-01). P. Balazs was supported by the FWF projects LoFT (P 34624), NoMASP (P 34922), Voice Prints (P 36446), and the WWTF project EleCom (LS23-024). The work of N. Holighaus was supported by the FWF project DISCO (PAT4780023).

\newpage
\bibliographystyle{abbrv}
\bibliography{references}

\end{document}